\documentclass[11pt]{article}

\usepackage[preprint]{acl}

\usepackage{times}
\usepackage{latexsym}
\usepackage[T1]{fontenc}
\usepackage[utf8]{inputenc}
\usepackage{microtype}
\usepackage{inconsolata}
\usepackage{amsmath,amssymb,mathtools,amsthm}
\usepackage{booktabs}
\usepackage{graphicx}
\usepackage{placeins}

\newtheorem{proposition}{Proposition}

\newcommand{\R}{\mathbb{R}}
\newcommand{\E}{\mathbb{E}}
\newcommand{\KL}{D_{\mathrm{KL}}}
\newcommand{\CE}{\mathcal{L}_{\mathrm{CE}}}
\newcommand{\softmax}{\operatorname{softmax}}
\newcommand{\sigmoid}{\sigma}
\newcommand{\diag}{\operatorname{diag}}

\newcommand{\TwoNN}{\operatorname{TwoNN}}
\newcommand{\1}{\mathbf{1}}
\newcommand{\calC}{\mathcal{C}}
\newcommand{\calD}{\mathcal{D}}
\newcommand{\calE}{\mathcal{E}}
\newcommand{\calL}{\mathcal{L}}
\newcommand{\calM}{\mathcal{M}}
\newcommand{\calT}{\mathcal{T}}
\newcommand{\calZ}{\mathcal{Z}}
\newcommand{\tr}{\mathrm{tr}}
\newcommand{\cali}{\mathrm{cal}}
\newcommand{\test}{\mathrm{test}}
\newcommand{\calJ}{\mathcal{J}}

\title{When Do Concepts Become Functionally Sufficient During Language-Model Training?}

\author{
\textbf{Raphael Bernas}\textsuperscript{1} \quad
\textbf{Paul G. Chevalier}\textsuperscript{1} \quad
\textbf{Fanny Jourdan}\textsuperscript{2,3} \quad
\textbf{Céline Hudelot}\textsuperscript{1} \\
\\
\textsuperscript{1}MICS, CentraleSupelec, Université Paris-Saclay \\
\textsuperscript{2}IRT Saint Exupéry, Toulouse, France \\
\textsuperscript{3}Mila - Quebec AI Institute,
Montreal, Canada\\
\\
\texttt{raphael.bernas@centralesupelec.fr}
}

\begin{document}
\maketitle
\begin{abstract}
Understanding a model and its learning mechanisms in depth requires identifying when its internal structures become useful, rather than simply looking at the final state. We study this through concept dynamics: at each layer and checkpoint, we decompose activations, select sparse soft masks, and inject masked reconstructions into the model. Concept analysis is therefore tested functionally: a mask is useful only insofar as it preserves a target under intervention. We compare sufficiency for activation reconstruction, linear decodability, true downstream preservation, and checkpoint transfer under learned alignment. The framework treats decomposition assumptions as hypotheses rather than interpretability guarantees, monitoring functional sufficiency across checkpoints and source-to-final reconstructability under learned alignment. At the shared fixed-penalty operating point across seven models, downstream masks retain substantially less soft mass than reconstruction masks; predictive-distribution shifts remain small.
\end{abstract}

\section{Introduction}

Modern neural models represent task information in high-dimensional activations. Understanding their learning requires asking what appears, how it is organized, and when it becomes useful for prediction. This motivates measuring how internal structures acquire functional status during training.

Concept-based methods are a natural step, but decorrelation, positivity, independence, or sparsity define coordinate systems rather than semantic guarantees. Their functional relevance should therefore be tested by intervention, not reconstruction alone.

We study this question through \emph{concept dynamics}. At each layer and checkpoint, a sparse mask selects coordinates from a fitted activation decomposition; the masked reconstruction is then evaluated in the original model on held-out data. Because represented information need not be used \citep{hewitt-liang-2019-designing,elazar-etal-2021-amnesic}, we compare reconstruction, fixed-head decodability, true downstream preservation, and checkpoint transfer.

The unit of analysis is a time-indexed sufficiency claim: at checkpoint $t$, a sparse soft mask preserves a specified target at a given layer. Relevant structures may be human-interpretable, distributed, or unnamed.

Our contributions are:

\begin{itemize}
    \item We introduce a checkpoint-wise activation-intervention framework that compares four preservation
    targets: reconstruction, fixed-head decodability, native-downstream behavior, and learned-alignment transfer.

    \item We relate native-downstream KL preservation to local predictive geometry and derive an exact
    interpretation of the supervised component-gate moments recorded by the experiments.

    \item Across seven checkpointed language models, we report target-conditional sparse operating points and
    model-specific checkpoint and depth profiles.
\end{itemize}

\section{Related Work}
\label{sec:related_work}

\paragraph{Mechanistic studies of training dynamics.}
Training dynamics have been studied through representation similarity (SVCCA and CKA) \citep{raghu2017svcca,kornblith2019similarity}, checkpointed suites \citep{biderman2023pythia}, activation convergence \citep{diehl-martinez-etal-2024-tending}, mechanistic progress measures \citep{nanda2023progress}, and feature emergence \citep{xu2024trackingdynamics,inaba-etal-2025-bilingual}. Geometric work tracks anisotropy, intrinsic dimension, and direction amplification \citep{razzhigaev2024shapelearninganisotropyintrinsic,bernas2026revisitinganisotropy}. It studies training geometry; we instead test checkpoint-wise held-out interventions with multiple preservation targets and learned transfer.

\paragraph{Concepts inside models.}
Concept activation vectors use human-specified examples \citep{kim2018interpretability}; unsupervised approaches decompose activations with SVD/PCA \citep{graziani2023uncovering}, NMF \citep{collins2018deep,fel2023craft,jourdan-etal-2023-cockatiel,zhang2021invertible}, or sparse autoencoders \citep{cunningham2023sparse,bricken2023monosemanticity}, with unifying comparisons in \citet{fel2023holistic,poche-etal-2025-consim}. Output-preserving and gradient-aware dictionaries connect coordinates to function \citep{braun2024functionally,olmo2025features}; supervised PCA likewise distinguishes high-variance from predictive directions \citep{bair2006prediction}. Probing and causal abstraction separate represented from used information \citep{hewitt-liang-2019-designing,elazar-etal-2021-amnesic,geiger2025causal}. We follow this intervention view while comparing fixed decompositions and targets across layers and checkpoints.

\section{Concept-Mask Interventions}
\label{sec:concept_subset_interventions}

To make sufficiency a held-out claim, we separate representation fitting, mask calibration, and intervention evaluation. Within a model, let $H_{t,s}^{(\ell)}\in\R^{n_s\times d_\ell}$ denote the activations at layer $\ell$, checkpoint $t$, and split $s\in\{\tr,\cali,\test\}$. We use $\calD_{\tr}$ to fit representations and auxiliary maps, $\calD_{\cali}$ to select masks, and $\calD_{\test}$ only for final measurements. The candidate budget $K_{\max}$ is model-specific and shared across its checkpoints, layers, and methods.

For each extraction method $m\in\calM$, we fit a dictionary, affine reconstruction baseline, and code-inference map:
\begin{align}
V_t^{(\ell,m)}&\in\R^{d_\ell\times K_{\max}},
&b_t^{(\ell,m)}&\in\R^{d_\ell},
\nonumber\\
\calC_t^{(\ell,m)}&:\R^{n\times d_\ell}\to\R^{n\times K_{\max}}.
\end{align}
The code matrix and full reconstruction on split $s$ are
\begin{align}
U_{t,s}^{(\ell,m)}
&=
\calC_t^{(\ell,m)}(H_{t,s}^{(\ell)}),
\label{eq:codes}\\
\widetilde H_{t,s}^{(\ell,m)}
&=
\1\bigl(b_t^{(\ell,m)}\bigr)^\top
+U_{t,s}^{(\ell,m)}\bigl(V_t^{(\ell,m)}\bigr)^\top.
\label{eq:full_reconstruction}
\end{align}
Here $b$ is zero for SVD and SemiNMF, the fitted mean for ICA, and the negative fitted shift for NMF. The notation covers sparse dictionaries \citep{cunningham2023sparse,bricken2023monosemanticity} and matrix factorizations \citep{jolliffe2002principal,lee1999learning,ding2010convex,hyvarinen2000independent}. Their constraints propose coordinates; intervention determines relevance to a target.

The maximum candidate budget is fixed from training activations only. In the executed protocol, the two-nearest-neighbor estimate \citep{facco2017estimating} is first clipped to a conservative interval,
\begin{align}
\widehat d_{t,\ell}
&=
\min\!\left\{d_\ell,
\max\!\left\{\frac{d_\ell}{4},
\TwoNN(H_{t,\tr}^{(\ell)})\right\}\right\},
\nonumber\\
\bar d
&=
\frac{1}{|L|\,|\calT|}
\sum_{\ell\in L}\sum_{t\in\calT}
\widehat d_{t,\ell},
\nonumber\\
K_{\max}
&=
\left\lceil 2\bar d+1\right\rceil .
\label{eq:kmax}
\end{align}
The factor of two expands the clipped estimate. Because the lower clip is active throughout (Appendix~\ref{app:experimental_conditions}), executed $K_{\max}$ is a conservative width-scaled capacity; mask calibration determines retention. A selection is represented by a soft mask
\begin{equation}
z(\alpha;\vartheta)_i=\sigmoid(\alpha_i/\vartheta),
\qquad
z(\alpha;\vartheta)\in[0,1]^{K_{\max}},
\label{eq:soft_mask}
\end{equation}
where $\vartheta>0$ is a gate temperature and $\alpha\in\R^{K_{\max}}$ is optimized on $\calD_{\cali}$. For fixed codes, dictionary, and baseline, the masked activation is
\begin{equation}
\widehat H_{t,s,z}^{(\ell,m)}
=
\1\bigl(b_t^{(\ell,m)}\bigr)^\top
+U_{t,s}^{(\ell,m)}
\diag(z)
\bigl(V_t^{(\ell,m)}\bigr)^\top.
\label{eq:masked_activation}
\end{equation}
Each mask is selected by minimizing a preservation loss with an explicit sparsity penalty:
\begin{equation}
F_c(\alpha;\vartheta)
=
\calL_c\bigl(z(\alpha)\bigr)
+
\lambda\|z(\alpha)\|_1,
\qquad
\lambda\ge0.
\label{eq:generic_mask_objective}
\end{equation}
Inside objectives, $z(\alpha)$ abbreviates the current-temperature mask. We run a finite number of Adam updates \citep{kingma2015adam} while geometrically annealing $\vartheta$ from $1.0$ to $0.1$, and return $\widehat\alpha_c$, with $z_c=z(\widehat\alpha_c;0.1)$; exact minimization is not assumed. Test evaluation replaces $H_{t,\test}^{(\ell)}$ by $\widehat H_{t,\test,z_c}^{(\ell,m)}$ in the original graph. We use the prediction-first convention $\CE(q,r)=-\E\sum_y r(y)\log q(y)$, averaged over rows. The reporting quantities are
\begin{align}
\Delta\mathrm{CE}^{y}_c
&=
\CE(\widehat p_{c,\test},y_{\test})
-
\CE(p_{\test},y_{\test}),
\label{eq:delta_ce}\\
\Delta\mathrm{KL}_c
&=
\KL\!\left(p_{\test}\;\middle\|\;\widehat p_{c,\test}\right),
\label{eq:delta_kl}\\
\Delta\mathrm{Acc}_c
&=
\mathrm{Acc}(\widehat p_{c,\test},y_{\test})
-
\mathrm{Acc}(p_{\test},y_{\test}).
\label{eq:delta_acc}
\end{align}
Writing $\bar h_{t,\test}^{(\ell)}$ for the mean test activation row, the activation error is
\begin{equation}
\mathrm{NMSE}_c
=
\frac{\|H_{t,\test}^{(\ell)}-
\widehat H_{t,\test,z_c}^{(\ell,m)}\|_F^2}
{\|H_{t,\test}^{(\ell)}-
\1(\bar h_{t,\test}^{(\ell)})^\top\|_F^2}.
\label{eq:intervention_nmse}
\end{equation}
The denominator is floored at $10^{-12}$. Kullback-Leibler divergence (KL) is evaluated as the soft-target cross-entropy difference using the original predictions as targets, whereas $\Delta\mathrm{CE}^{y}_c$ uses gold labels. We report mask size as $K_{\mathrm{soft}}(z)=\sum_i z_i$ and $K_{\mathrm{hard}}(z)=\sum_i\1\{z_i>1/2\}$.

\section{Preservation Objectives}
\label{sec:preservation_objectives}

Holding the candidate decomposition fixed, we vary the selection target: geometry, fixed-head access, true downstream behavior, or cross-checkpoint reconstructability. This progression separates information that is easy to reconstruct from information accessible to an adapter and used by the model itself.

\paragraph{Activation reconstruction.}
The geometric baseline retains activation variation by minimizing variance-normalized Euclidean error. Let $\bar h_{t,\cali}^{(\ell)}$ be the mean calibration activation row and
\begin{align}
\bar s_{t,\cali,\ell}^2
&=
\frac{1}{n_{\cali}d_\ell}
\left\|H_{t,\cali}^{(\ell)}-
\1(\bar h_{t,\cali}^{(\ell)})^\top\right\|_F^2,
\nonumber\\
s_{t,\cali,\ell}^2
&=\max\{\bar s_{t,\cali,\ell}^2,10^{-12}\}.
\end{align}
The executed objective is
\begin{equation}
\calL_{\mathrm{rec}}^{(\ell,t,m)}(z)
=
\frac{1}{n_{\cali}d_\ell s_{t,\cali,\ell}^2}
\left\|
H_{t,\cali}^{(\ell)}
-
\widehat H_{t,\cali,z}^{(\ell,m)}
\right\|_F^2.
\label{eq:rec_loss}
\end{equation}
The normalization permits layer comparison without changing within-cell error orderings; geometric fidelity alone need not preserve downstream computation.

\paragraph{Fixed-head decodability.}
The second objective preserves information accessible through an affine translator $\calE_\theta:\R^{d_\ell}\to\R^{d_{\mathrm{head}}}$ into the checkpoint-$t$ native head $g_t$. Define $p_\theta^{\mathrm{lin}}(H)=\softmax(g_t(\calE_\theta(H)))$. AdamW \citep{loshchilov2019decoupled} trains $\theta$ on $\calD_{\tr}$ to reduce
\begin{equation}
\calJ_{\mathrm{lin}}(\theta)
=
\CE\!\left(
p_\theta^{\mathrm{lin}}(H_{t,\tr}^{(\ell)}),
y_{\tr}
\right).
\label{eq:linear_training}
\end{equation}
The mask then preserves the induced distribution:
\begin{equation}
\calL_{\mathrm{lin}}^{(\ell,t,m)}(z)
=
\KL\!\left(
p_{\widehat\theta}^{\mathrm{lin}}(H_{t,\cali}^{(\ell)})
\;\middle\|\;
p_{\widehat\theta}^{\mathrm{lin}}(\widehat H_{t,\cali,z}^{(\ell,m)})
\right).
\label{eq:linear_loss}
\end{equation}
Here $\widehat\theta$ denotes the returned parameters. This resembles linear probing \citep{hewitt-liang-2019-designing}: for autoregressive models the fixed head is linear, while for masked models the affine translator feeds the fixed nonlinear MLM head. ``Linear'' is therefore table shorthand for fixed-head decodability; final evaluation still intervenes in the full model.

\paragraph{True downstream preservation.}
Replacing the adapter with the actual remainder of the network turns accessibility into a direct behavioral criterion. Let $f_{t,>\ell}$ be the computation after layer $\ell$ at checkpoint $t$, and define
\begin{equation}
p_t^{(\ell)}(H)
=
\softmax\!\left(g_t(f_{t,>\ell}(H))\right).
\label{eq:true_downstream_prob}
\end{equation}
The preservation loss is
\begin{align}
\calL_{\mathrm{true}}^{(\ell,t,m)}(z)
=
\KL\!\left(
p_t^{(\ell)}(H_{t,\cali}^{(\ell)})
\;\middle\|\;
\right.
\nonumber\\
\left.
p_t^{(\ell)}(\widehat H_{t,\cali,z}^{(\ell,m)})
\right).
\label{eq:true_loss}
\end{align}
This directly tests whether masked reconstruction preserves the model's prediction distribution, grounding representation-use claims in an internal counterfactual \citep{elazar-etal-2021-amnesic,geiger2025causal}.

\paragraph{Checkpoint transfer.}
Finally, transfer asks which checkpoint-$t$ concepts reconstruct checkpoint $T$. An affine $A_t:\R^{K_{\max}}\to\R^{K_{\max}}$ is trained on $\calD_{\tr}$ by aligning row-wise softmax-normalized code profiles. Write
\begin{align}
\Pi_t(A_t)
&=\softmax(A_t(U_{t,\tr}^{(\ell,m)})),
\nonumber\\
\Pi_T
&=\softmax(U_{T,\tr}^{(\ell,m)}).
\label{eq:alignment_distributions}
\end{align}
The same AdamW optimizer, with decoupled weight-decay coefficient $\rho$, trains $A_t$ to reduce
\begin{align}
\calJ_{\mathrm{align}}(A_t)
=
\KL\!\left(\Pi_t(A_t)\;\middle\|\;\Pi_T\right).
\label{eq:alignment_training}
\end{align}
The returned map $\widehat A_t$ decodes masked source codes with the checkpoint-$T$ dictionary and baseline:
\begin{equation}
\begin{aligned}
\widehat H_{T\leftarrow t,s,z}^{(\ell,m)}
&=\1\bigl(b_T^{(\ell,m)}\bigr)^\top\\
&\quad+\widehat A_t\!\left(U_{t,s}^{(\ell,m)}\diag(z)\right)
\bigl(V_T^{(\ell,m)}\bigr)^\top.
\end{aligned}
\label{eq:transfer_reconstruction}
\end{equation}
The calibration loss combines later-activation reconstruction and normalized-code alignment. Let
\begin{align}
\Pi_{T,\cali}
&=\softmax(U_{T,\cali}^{(\ell,m)}),
\nonumber\\
\widehat\Pi_{z,\cali}
&=
\softmax\!\Bigl(
\widehat A_t(U_{t,\cali}^{(\ell,m)}\diag(z))
\Bigr).
\label{eq:calibration_code_dists}
\end{align}
Abbreviate the calibration reconstruction in Equation~\eqref{eq:transfer_reconstruction} as $\widehat H_z^{T\leftarrow t}$. With $s_{T,\cali,\ell}^2$ defined as in the reconstruction objective but at checkpoint $T$, the transfer loss is
\begin{align}
\calL_{\mathrm{dist}}^{(\ell,t,m)}(z)
&=
\frac{1}{n_{\cali}d_\ell s_{T,\cali,\ell}^2}
\left\|
H_{T,\cali}^{(\ell)}
-
\widehat H_z^{T\leftarrow t}
\right\|_F^2
\nonumber\\
&\quad+
\mu\,
\KL\!\left(\Pi_{T,\cali}\|\widehat\Pi_{z,\cali}\right).
\label{eq:dist_loss}
\end{align}
The asymmetric directions match execution: alignment uses mapped-source-to-target KL, whereas mask fitting uses target-to-masked-aligned KL. Unlike representation similarity \citep{raghu2017svcca,kornblith2019similarity}, transfer asks whether soft-masked earlier coordinates reconstruct a later activation that preserves held-out behavior.

All four objectives use the same held-out intervention mechanism; their disagreement therefore reveals target-specific sufficiency. The next section explains two sources of it.

\section{Relations Among the Monitors}
\label{sec:theory}

Relating the four operational definitions explains why the same candidates yield different sets. Downstream KL is locally curvature-weighted reconstruction error, and the saved supervised-gradient scores resolve this sensitivity coordinate by coordinate. Fix $t$, $\ell$, and $m$ and omit these indices. For example $X$, let
\begin{align}
\widehat a_z(X)
&=b+\sum_{i=1}^{K_{\max}}z_i u_i(X)v_i,
\nonumber\\
\Delta_z(X)&=\widehat a_z(X)-a(X).
\end{align}
Let $r_t^{(\ell)}(a)$ denote the downstream logit map and $p_t(\cdot\mid a)=\softmax(r_t^{(\ell)}(a))$. The population form of the downstream objective is
\begin{equation}
D_t(z)=
\E_X\KL\!\left(
p_t(\cdot\mid a(X))
\middle\|
p_t(\cdot\mid \widehat a_z(X))
\right).
\label{eq:theory_downstream}
\end{equation}

\paragraph{Local downstream geometry.}
Reconstruction treats every residual direction equally, whereas the downstream network can amplify some directions and ignore others. Write $J_t(a)=\partial r_t^{(\ell)}(a)/\partial a$ and $S(p)=\diag(p)-pp^\top$. The predictive Fisher metric pulled back to the intervention layer makes this weighting explicit:
\begin{align}
G_t(a)
&=J_t(a)^\top S(p_t(\cdot\mid a))J_t(a),
\nonumber\\
Q_t(z)
&=\frac12\E_X\!\left[
\Delta_z^\top G_t(a)\Delta_z
\right].
\label{eq:local_downstream_quad}
\end{align}
The second-order KL--Fisher expansion is standard in information geometry \citep{amari1998natural,martens2020new} and has recently been stated for activation steering \citep{wang2026fishback}. The following proposition records the remainder needed when masks are compared.

\begin{proposition}[Local comparison of preservation losses]
\label{prop:local_downstream_approx}
Let $\calZ$ be the compared mask class, let $P:\calZ\to\R$ be a shared penalty, and write $F_D=D_t+P$ and $F_Q=Q_t+P$. Suppose that, for every $z\in\calZ$ and almost every $X$, the downstream KL is $C^3$ along the segment from zero to $\Delta_z(X)$, with third-derivative operator norm at most $B_X(z)$ and $\E[B_X(z)\|\Delta_z(X)\|_2^3]<\infty$. Then
\begin{align}
\eta_t(z)&:=\frac16\E_X\!\left[B_X(z)\|\Delta_z(X)\|_2^3\right],
\nonumber\\
|D_t(z)-Q_t(z)|&\le\eta_t(z).
\label{eq:local_downstream_bound}
\end{align}
If $z_D$ and $z_Q$ minimize $F_D$ and $F_Q$, respectively, over $\calZ$, then
\begin{equation}
F_D(z_Q)-F_D(z_D)
\le
\eta_t(z_D)+\eta_t(z_Q).
\label{eq:local_optimizer_comparison}
\end{equation}
\end{proposition}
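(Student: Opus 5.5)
Fix $X$ and $z$, and write $\phi_X(\delta)=\KL\bigl(p_t(\cdot\mid a(X))\,\|\,p_t(\cdot\mid a(X)+\delta)\bigr)$ as a function of the perturbation $\delta\in\R^{d_\ell}$. The plan is a pointwise third-order Taylor expansion of $\phi_X$ along the segment $\{s\Delta_z(X):s\in[0,1]\}$, followed by integration over $X$ and a standard perturbed-optimizer argument.

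\textbf{Pointwise expansion.} First record the low-order terms at $\delta=0$.
\begin{itemize}
\item $\phi_X(0)=0$, since the two distributions coincide.
\item $\nabla\phi_X(0)=0$. Writing $r=r_t^{(\ell)}$ and $p=p_t(\cdot\mid a)$, the derivative of $-\sum_y p(y)\log\softmax(r(a+\delta))_y$ with respect to the logits is $\softmax(r(a+\delta))-p$, which vanishes at $\delta=0$.
\item $\nabla^2\phi_X(0)=G_t(a)$. Differentiating once more at $\delta=0$, the term involving second derivatives of $r$ is multiplied by the vanishing vector $\softmax(r(a))-p$ and drops out. What remains is $J_t(a)^\top S(p)J_t(a)$, because the Jacobian of the softmax at logits $r(a)$ is $\diag(p)-pp^\top$.
\end{itemize}
Taylor's theorem with Lagrange (or integral) remainder, using the $C^3$ assumption along the segment, then gives
\[
\bigl|\phi_X(\Delta_z)-\tfrac12\Delta_z^\top G_t(a)\Delta_z\bigr|\le \tfrac16 B_X(z)\|\Delta_z\|_2^3 .
\]
The third directional derivative along $\Delta_z$ is bounded by the operator norm times $\|\Delta_z\|_2^3$.

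\textbf{Integration over $X$.} Take expectations over $X$. The integrability hypothesis $\E[B_X\|\Delta_z\|^3]<\infty$ makes the remainder integrable. I also need $D_t(z)$ and $Q_t(z)$ to be finite so that they can be subtracted. The cleanest route is to note that the difference of the integrands is dominated by an integrable function, so $D_t-Q_t=\E[\phi_X-\tfrac12\Delta^\top G\Delta]$ holds whenever either side is finite. I would state this finiteness explicitly as implicit in the proposition, and otherwise interpret the bound in the extended sense. This is the only mildly delicate point. Jensen's inequality, $|\E f|\le\E|f|$, then yields \eqref{eq:local_downstream_bound}.

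\textbf{Optimizer comparison.} Since $F_D-F_Q=D_t-Q_t$ pointwise on $\calZ$, the shared penalty $P$ cancels. Then
\[
F_D(z_Q)\le F_Q(z_Q)+\eta_t(z_Q)\le F_Q(z_D)+\eta_t(z_Q)\le F_D(z_D)+\eta_t(z_D)+\eta_t(z_Q).
\]
The middle inequality uses the minimality of $z_Q$ for $F_Q$, and the outer two inequalities use \eqref{eq:local_downstream_bound} at $z_Q$ and at $z_D$ respectively. Rearranging gives \eqref{eq:local_optimizer_comparison}.

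The main obstacle is the Hessian identification in the first step. It requires care that the curvature terms of $r$ vanish exactly at $\delta=0$, and that the KL direction is fixed, with the original distribution first. With the reverse direction the Hessian at zero is the same, but the remainder operator would differ, so I would keep the stated direction throughout.
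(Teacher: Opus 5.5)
Your proposal is correct and follows the paper's proof essentially step for step. The pointwise KL has zero value and zero gradient at $\delta=0$, and its Hessian there is $G_t(a)$ because the logit-map curvature terms multiply the vanishing logit-space gradient. Taylor's theorem then bounds the cubic remainder by $\tfrac16 B_X(z)\|\Delta_z(X)\|_2^3$, and the optimizer comparison is the same chain $F_D(z_Q)\le F_Q(z_Q)+\eta_t(z_Q)\le F_Q(z_D)+\eta_t(z_Q)\le F_D(z_D)+\eta_t(z_D)+\eta_t(z_Q)$.

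Your finiteness remark is a small piece of extra care that the paper leaves implicit. Both integrands are nonnegative and their difference is integrable, so $D_t(z)$ and $Q_t(z)$ are finite together, and the only case you must exclude is both being infinite.
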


With $P(z)=\lambda\|z\|_1$, this matches mask selection: within the controlled neighborhood, optimizing the quadratic incurs only a third-order regularized loss gap.

Let $R_t(z)=\E_X\|\Delta_z(X)\|_2^2$, the unnormalized counterpart of the reconstruction loss within a fixed experimental cell. For $R_t(z)>0$, define the curvature in the residual directions by
\[
\bar\lambda_t(z)=
\frac{\E_X[\Delta_z(X)^\top G_t(a(X))\Delta_z(X)]}
     {\E_X\|\Delta_z(X)\|_2^2}.
\]
Then the exact factorization
\begin{equation}
Q_t(z)=\frac12\bar\lambda_t(z)R_t(z)
\label{eq:reconstruction_anisotropy}
\end{equation}
shows where the objectives separate. A mask may leave a large residual in insensitive directions or a small residual in high-curvature directions, reversing the Euclidean ordering within the local approximation. This mechanism is independent of the decomposition.

The same local quantity can be written as a directional sensitivity. For $Y\mid X\sim p_t(\cdot\mid a(X))$, let $\xi_t(a,Y)=\nabla_a\log p_t(Y\mid a)$. The standard Fisher-score identity yields
\begin{equation}
Q_t(z)
=
\frac12\E_{X,Y}
\left[
\left\langle\Delta_z(X),\xi_t(a(X),Y)\right\rangle^2
\right].
\label{eq:gradient_fisher_objective}
\end{equation}
The experiments use gold-label rather than model-sampled gradients. These supervised second moments are empirical-Fisher-like quantities and need not equal Equation~\eqref{eq:gradient_fisher_objective} \citep{kunstner2019limitations}.

\paragraph{Supervised concept-gate scores.}
Equation~\eqref{eq:gradient_fisher_objective} scores a complete intervention residual. To obtain a coordinate-resolved diagnostic, let $c_i(X)=u_i(X)v_i$ be the contribution of concept $i$ and introduce gates around the unmodified activation,
\begin{equation}
a_s(X)=a(X)+\sum_i(s_i-1)c_i(X).
\label{eq:diagnostic_gate}
\end{equation}
This definition satisfies $a_{\mathbf 1}(X)=a(X)$ even when the decomposition has a reconstruction residual. For the gold-label loss $\ell$, define
\begin{equation}
g_i(X)=
\left.\frac{\partial\ell(X;s)}{\partial s_i}\right|_{s=\mathbf 1}
=\langle c_i(X),\nabla_a\ell(X)\rangle .
\label{eq:gate_gradient}
\end{equation}
The stored scores are
\begin{align}
\tau_i&=\E g_i^2,
&\omega_i&=(\E g_i)^2,
\nonumber\\
\kappa_i&=\tau_i-\omega_i
=\mathrm{Var}(g_i).
\label{eq:gate_moments}
\end{align}
Here $\E$ denotes the empirical average over the scored training rows.
They are related to gradient-based feature sensitivity and first-order intervention approximations \citep{liu2021group,syed2024attribution}, but remain local gate statistics.

\begin{proposition}[Batch averaging of concept-gate gradients]
\label{prop:gate_averaging}
Let $X_1,\ldots,X_B$ be IID draws with replacement from the scored-row distribution and, for a concept set $A$, let
$\bar g_{B,A}=B^{-1}\sum_{b=1}^B(g_i(X_b))_{i\in A}$. If the gate gradients have finite second moments, then
\begin{equation}
\E\|\bar g_{B,A}\|_2^2
=\sum_{i\in A}\left[
\left(1-\frac1B\right)\omega_i+\frac{\tau_i}{B}
\right].
\label{eq:gate_batch_identity}
\end{equation}
\end{proposition}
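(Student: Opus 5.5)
The identity is a coordinatewise second-moment computation for an IID sample mean, so we reduce to a single concept and then sum. Since $\|\bar g_{B,A}\|_2^2=\sum_{i\in A}\bar g_{B,i}^2$ with $\bar g_{B,i}=B^{-1}\sum_{b=1}^B g_i(X_b)$, linearity of expectation gives $\E\|\bar g_{B,A}\|_2^2=\sum_{i\in A}\E\bar g_{B,i}^2$. Each summand is finite because every $g_i$ has a finite second moment. It therefore suffices to show, for each fixed $i$,
\begin{equation*}
\E\bar g_{B,i}^2=\Bigl(1-\frac1B\Bigr)\omega_i+\frac{\tau_i}{B}.
\end{equation*}

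For fixed $i$, expand the square of the sample mean into diagonal and off-diagonal terms:
\begin{equation*}
\E\bar g_{B,i}^2=\frac{1}{B^2}\sum_{b=1}^B\E\bigl[g_i(X_b)^2\bigr]+\frac{1}{B^2}\sum_{b\neq b'}\E\bigl[g_i(X_b)g_i(X_{b'})\bigr].
\end{equation*}
The draws are from the scored-row distribution, whose expectation is by definition the empirical average $\E$ used in Equation~\eqref{eq:gate_moments}. Hence each diagonal term equals $\tau_i$, and the $B$ of them contribute $\tau_i/B$.

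For the off-diagonal terms, independence of $X_b$ and $X_{b'}$ for $b\neq b'$ gives $\E[g_i(X_b)g_i(X_{b'})]=\E g_i(X_b)\,\E g_i(X_{b'})=(\E g_i)^2=\omega_i$. The first moments are finite because the second moments are. There are $B(B-1)$ ordered pairs, so these terms contribute $\frac{B-1}{B}\omega_i=(1-\frac1B)\omega_i$. Adding the two contributions and summing over $i\in A$ gives Equation~\eqref{eq:gate_batch_identity}.

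Equivalently, one can use the decomposition $\E\bar g_{B,i}^2=(\E\bar g_{B,i})^2+\mathrm{Var}(\bar g_{B,i})=\omega_i+\kappa_i/B$. Substituting $\kappa_i=\tau_i-\omega_i$ yields the same formula and also shows that the batch-averaged signal interpolates between the variance-inclusive $\tau_i$ at $B=1$ and the mean-squared $\omega_i$ as $B\to\infty$.

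There is no real obstacle. The one point to state explicitly is that sampling \emph{with replacement} is what makes the draws IID, so the cross terms factor exactly even though the scored rows form a finite set. Without replacement there would be a finite-population correction and the identity would fail.
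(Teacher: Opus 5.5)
Your proof is correct and is essentially the paper's argument: the paper works coordinatewise, applies the variance-of-the-mean identity $\E[\bar g_{B,i}^2]=\omega_i+\kappa_i/B$ (your ``equivalently'' route), substitutes $\kappa_i=\tau_i-\omega_i$, and sums over $i\in A$. Your main diagonal/off-diagonal expansion simply derives that same identity from scratch.
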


Thus $\tau$ measures total single-example sensitivity, whereas $\omega$ isolates the signed component preserved by averaging. Equation~\eqref{eq:gate_batch_identity} is the variance-of-the-mean identity for concept gates \citep{mccandlish2018empirical,chatterjee2020coherence}, independent of orthogonality. These are local diagnostics, not realized updates or occupied dimensions. Appendix~\ref{app:theory_proofs} gives the proof and SVD-prefix consequence.

\section{Empirical Results}
\label{sec:empirical}

We study three Pythia models \citep{biderman2023pythia}, OLMo-1B \citep{groeneveld-etal-2024-olmo,allenai2024olmo1b0724}, EuroBERT-210M/610M \citep{boizard2025eurobert}, and ModernCamemBERT \citep{antoun-etal-2025-modernbert}. The paired common grid has eight checkpoints, six layer ranks, four decompositions, and four targets (Appendix~\ref{app:experimental_conditions}). We ask whether target-aware masks are compact and have structured checkpoint/depth profiles, and how decomposition affects them.

Dictionaries and auxiliary maps use up to 8,000 eligible training rows, masks use a disjoint 4,000-sequence calibration split, and reported effects use 3,000 held-out sequences, retaining one eligible target-token row per sequence. SVD, ICA, NMF, and SemiNMF share the within-model candidate budget. Cross-objective contrasts therefore match model, method, checkpoint, and layer exactly; the fixed shared penalty defines one operating point rather than an estimated sparsity--fidelity frontier.

Behavioral results use the fitted soft gates, with $K_{\rm soft}/K_{\max}=\sum_i z_i/K_{\max}$; $K_{\rm hard}/K_{\max}=|\{i:z_i>0.5\}|/K_{\max}$ summarizes their thresholded composition. Relative KL and CE use original target-token CE as denominator, and accuracy damage is $100(\mathrm{Acc}_{0}-\mathrm{Acc}_{z})$. Each dashboard entry first takes the median over the balanced $4\times8\times6$ method--checkpoint--layer grid within model and then the median [IQR] over seven models; empty rates are model macro-averages.

\subsection{Target-aware selection finds compact operating points}

Figure~\ref{fig:functional_synthesis} provides the common view: panels A--C trace held-out behavioral effects, panel D directly pairs downstream and reconstruction selection, and panel E reports source-to-final transfer. The transfer endpoint is learned self-alignment and is therefore presented separately from the three same-checkpoint objectives.

\begin{figure*}[t]
\centering
\includegraphics[width=.92\textwidth]{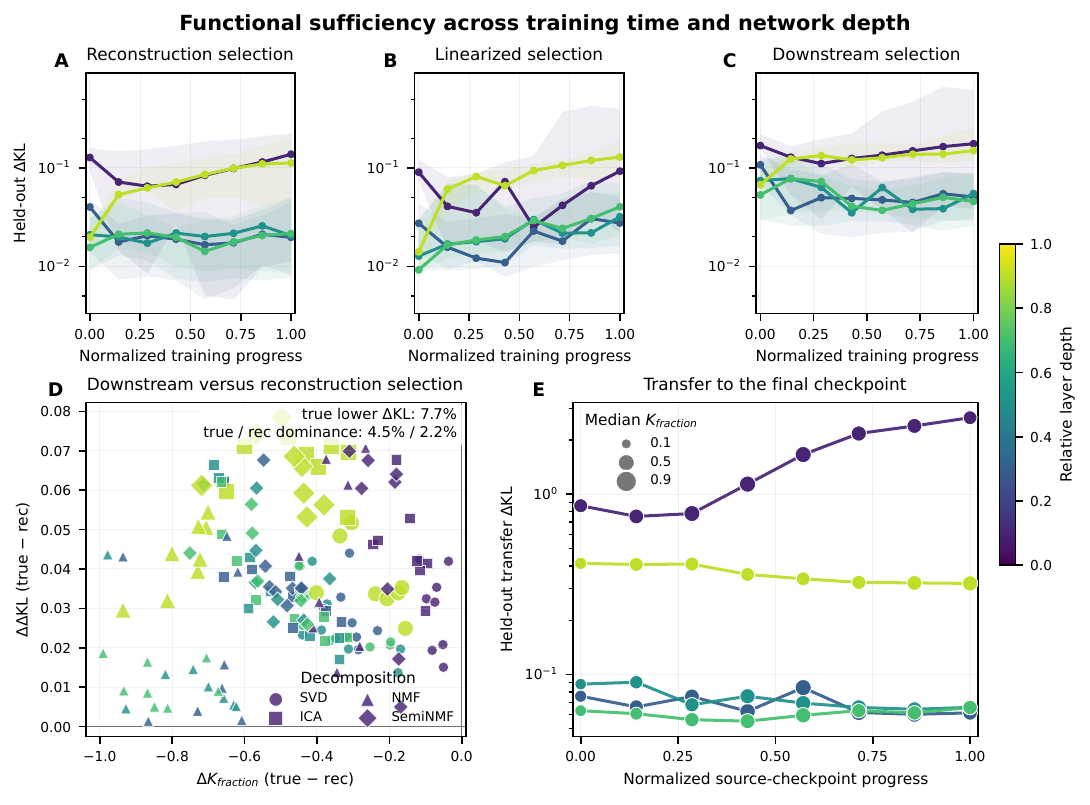}
\caption{\textbf{Functional sufficiency across checkpoints and layers.} A--C: held-out $\Delta\mathrm{KL}$ for reconstruction, fixed-head (``linearized'' in artwork), and downstream selection, using ordinal checkpoint and layer ranks. D: paired downstream-minus-reconstruction occupancy and $\Delta\mathrm{KL}$ changes. E: learned-alignment source-to-final reconstructability. Curves and bands are hierarchical medians and IQRs over seven model summaries.}
\label{fig:functional_synthesis}
\end{figure*}

\begin{table}[t]
\centering
\small
\setlength{\tabcolsep}{2.6pt}
\caption{Seven-model operating points.}
\label{tab:main-operating-points}
\begin{tabular}{@{}lrrrr@{}}
\toprule
Objective & Soft (\%) & Rel. KL & Acc. $\downarrow$ & NMSE \\
\midrule
Reconstruction & 73.0 & 0.009 & 0.37 & 0.215 \\
Fixed-head & 83.4 & 0.014 & 0.47 & 0.270 \\
Downstream & 6.6 & 0.020 & 0.80 & 0.959 \\
\midrule
Transfer & 28.6 & 0.093 & 2.68 & 1.006 \\
\bottomrule
\end{tabular}
\vspace{2pt}

\begin{minipage}{0.98\linewidth}
\footnotesize\textit{Notes.} Medians of seven within-model medians. Soft is $100K_{\rm soft}/K_{\max}$; accuracy damage is in percentage points. Transfer is source-to-final.
\end{minipage}
\end{table}

All three same-checkpoint objectives preserve the predictive distribution closely at their fitted operating points (Table~\ref{tab:main-operating-points}). Reconstruction and fixed-head selection retain $73.0\%$ and $83.4\%$ soft occupancy at relative KL $0.009$ and $0.014$. Downstream selection is markedly more compact: it uses $6.6\%$---about one fifteenth of the candidate budget---at relative KL $0.020$ and $0.80$ percentage-point accuracy damage. Exact pairing places $93.2\%$ of reconstruction--downstream cells in a sparsity--fidelity trade-off class (Table~\ref{tab:pairwise-objectives}), indicating that the target changes the operating point. Transfer is listed separately.

Compactness is not a pooled artifact: every model's downstream median occupancy is below reconstruction, while relative KL ranges from $0.005$ to $0.051$ and target-token accuracy damage from $0.23$ to $2.10$ points (Table~\ref{tab:model-objective-ledger}). More revealingly, downstream masks have median reconstruction NMSE $0.959$, versus $0.215$ for reconstruction selection, while preserving behavior at relative KL $0.020$. This geometry--function separation motivates the local-curvature interpretation below. Reconstruction and fixed-head fidelity are nearly indistinguishable under exact pairing: across 1,344 cells, the median log-KL ratio and relative-CE contrast are both effectively zero, although fixed-head masks retain $8.1$ percentage points more soft occupancy (Table~\ref{tab:pairwise-objectives}). Thus a translator--head path preserves much the same signal without retaining the same soft mass.

\subsection{Training and depth form model-specific profiles}

The pooled curves hide a clearer longitudinal result: each model exhibits a structured combination of checkpoint evolution and depth dependence. Figure~\ref{fig:architecture_fingerprints} summarizes these two axes. The left panel correlates the eight scheduled checkpoint ranks with $\log_{10}\Delta\mathrm{KL}$ within a fixed layer; the right correlates the six tracked-layer ranks with the same metric within a fixed checkpoint. Positive values indicate that the selected-mask effect is larger at later scheduled checkpoints or sampled layers, respectively.

\begin{figure*}[t]
\centering
\includegraphics[width=.84\textwidth]{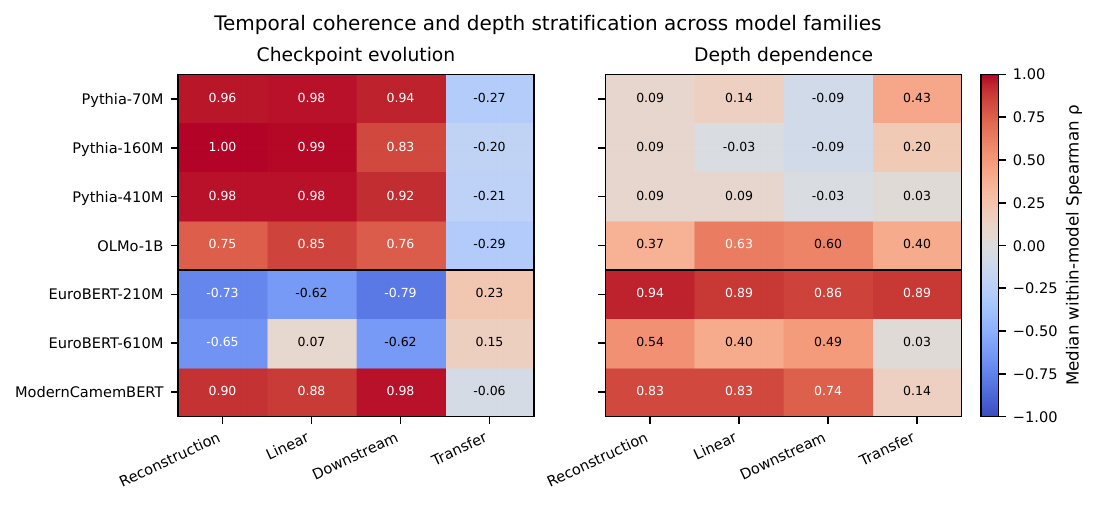}
\caption{\textbf{Model-specific temporal and depth profiles.} Cells are within-model median Spearman correlations over methods and the complementary axis. The horizontal rule separates autoregressive and masked models; differences remain descriptive because their data, tokenizers, objectives, and recipes also differ.}
\label{fig:architecture_fingerprints}
\end{figure*}

For all three same-checkpoint targets, the Pythia models have strong positive checkpoint correlations ($0.83$--$1.00$) and nearly flat layer-rank correlations ($-0.09$--$0.14$). OLMo is likewise positive over checkpoints ($0.75$--$0.85$) but more depth-stratified ($0.37$--$0.63$). ModernCamemBERT is positive on both axes. EuroBERT-210M supplies a coherent contrasting profile: its checkpoint associations are negative ($-0.79$ to $-0.62$), while its layer-rank associations are strong ($0.86$--$0.94$). The agreement across three distinct selection targets within each of these models is itself informative: the monitor recovers a model-level temporal/depth signature rather than a pattern tied to a single loss. EuroBERT-610M and transfer show more mixed profiles, which the layer-resolved trajectories in Appendix~\ref{app:empirical_diagnostics} make explicit.

The within-family size comparison is also structured: Pythia-410M has lower KL than Pythia-70M in $92.8\%$ of matched cells, a smaller normalized mask in $78.8\%$, and both in $72.4\%$ (Figure~\ref{fig:architecture_scale}).

The depth allocation also changes over training. The $\Delta\mathrm{KL}$-weighted center moves toward later sampled layers for reconstruction in six models (median $+0.34$) and fixed-head selection in all seven (median $+0.25$). This indicates a relative relocation of the intervention effect, while downstream selection retains a more model-dependent depth profile (Figures~\ref{fig:depth_relocation} and~\ref{fig:path_shape}). Appendix~\ref{app:empirical_diagnostics} reports the full trajectories and within-family size contrasts.

\subsection{Learned-alignment transfer}

The decomposition provides another informative axis. ICA often reaches lower raw KL with larger masks, whereas SVD has the highest paired within-cell Pareto-dominance rates (Table~\ref{tab:method-behavior}; Figure~\ref{fig:method_pareto}): $0.66$, $0.65$, $0.45$, and $0.33$ for reconstruction, fixed-head, downstream, and transfer, respectively. Thus raw fidelity and sparsity-aware comparison reward different properties of the candidate coordinates. These are protocol-specific operating-point results, not universal method rankings.

The transfer endpoints show a particularly consistent decomposition split. From the earliest source to learned self-alignment, held-out transfer KL is lower for SVD and ICA in all seven models, but higher for NMF in five and SemiNMF in six (Figure~\ref{fig:transfer_methods}). This repeated cross-model contrast explains the nearly flat aggregate curve and indicates that the transfer monitor distinguishes how alignment interacts with each coordinate family. Because the last point is self-alignment, the contrast concerns endpoint behavior rather than monotonic persistence of individual concepts.

\subsection{Downstream masks concentrate supervised sensitivity}

Table~\ref{tab:selected-set-ledger} provides the clearest evidence that the compact downstream masks are compositionally meaningful. In every model, their nonempty hard selections are more enriched for both $\tau$ and $\omega$ than reconstruction selections relative to the uniform same-cardinality expectation. Enrichment divides retained score mass by hard occupancy, so $1\times$ is the analytic size-only baseline. Across-model medians of the independently aggregated model summaries are $7.3\%$ hard occupancy, $49\%$ of total $\tau$ mass, and $75\%$ of total $\omega$ mass. Median enrichments are $3.26\times$ and $5.32\times$, compared with $1.08\times$ for both scores under reconstruction selection; downstream model values span $1.26$--$13.70\times$ for $\tau$ and $1.47$--$16.20\times$ for $\omega$. The larger $\omega$ enrichment adds a mean-alignment interpretation: Proposition~\ref{prop:gate_averaging} identifies $\omega$ as the signed component preserved by example averaging. Thus output-preserving selection concentrates mean-aligned supervised sensitivity, not only large per-example second moments.

Activation energy is a strong baseline: its model-level median rank correlation with $\tau$ ranges from $0.85$ to $0.96$ (Table~\ref{tab:geometry-gradient-summary}). In the stricter all-decomposition same-$k$ control, downstream selections trail the activation-energy head by only $0.030$ of total $\tau$ mass and $0.026$ of total $\omega$ mass at the model-hierarchical median. They attain $0.909$ and $0.938$ median efficiency relative to the exact additive top-$k$ score oracles (Table~\ref{tab:same-budget-oracle}; Figure~\ref{fig:same_budget_mass}). This substantial, model-dependent oracle capture is notable because the masks optimize held-out output preservation, not either gate score; it connects behavioral sufficiency, activation energy, and supervised local sensitivity without claiming that downstream selection universally surpasses energy ranking.

The SVD endpoint analysis complements this set-level result: the normalized entropy-effective-number analogue of energy rank decreases from the earliest to final checkpoint in 36 of 42 model--layer pairs (Figure~\ref{fig:svd_tail}). Training therefore often concentrates activation energy while the behaviorally selected allocation continues to change, motivating the curvature-weighted view of Section~\ref{sec:theory}.

\subsection{Behavioral monitors agree broadly and add information}

The output metrics provide a final consistency check. Pooled $\Delta\mathrm{KL}$--$\Delta\mathrm{CE}$ associations are $0.96$--$0.97$ for autoregressive and $0.71$--$0.88$ for masked models, and CE--accuracy signs agree in $80.4\%$ of reconstruction and $89.6\%$ of downstream cells after model macro-averaging. Within checkpoint strata, KL--CE correlations range from $0.75$ to $0.95$ for autoregressive models and from $0.15$ to $0.55$ for masked models (Figure~\ref{fig:behavioral_agreement}); gold-label CE also improves in $8$--$16\%$ of masked-model interventions but only about $0$--$2\%$ of autoregressive interventions. The strong aggregate agreement supports the overall behavioral signal, while the stratified differences show the value of retaining both full-distribution and target-token monitors.

\section{Discussion}
\label{sec:discussion}

Three empirical findings stand out. First, direct downstream optimization recovers compact functional soft masks: a median $6.6\%$ soft occupancy preserves the original predictive distribution to relative KL $0.020$. Second, the temporal and depth profiles are structured and often agree across objectives within a model, while differing across models. Third, decomposition-specific transfer endpoints repeat across all seven models for SVD and ICA. Together, these results show that concept dynamics is more informative as a collection of target-aware profiles than as a single maturity score.

\paragraph{Interpretive intuition.}
The occupancy gap, substantial downstream NMSE, and small output shift in Table~\ref{tab:main-operating-points} are consistent, under the local approximation, with pronounced task-conditional anisotropy in the pullback predictive Fisher of the remaining network. Reconstruction weights residual energy uniformly, whereas Equation~\eqref{eq:reconstruction_anisotropy} weights it by this predictive curvature. Thus much activation variation may lie in weakly used directions, while a small retained soft mass carries a disproportionate share of supervised gate sensitivity. This is an intuition rather than a measured Fisher eigenspectrum: objective scaling, decomposition alignment, nonlocal interventions, and optimization may also contribute. It is nevertheless consistent with the tangent-aligned account of \citet{bernas2026revisitinganisotropy}, in which training preferentially reinforces directions carrying concentrated gradient energy and anisotropy.

The broad, behaviorally similar reconstruction and fixed-head operating points motivate a complementary sender--receiver hypothesis. Training may shape activation space so that useful distinctions are accessible to a shallow affine readout, while downstream computation acts as a direction-selective receiver that selects and amplifies a narrower set. A linearly accessible sender state coupled to such a receiver is one possible distributed form of a circuit. The present interventions do not identify that circuit or directly establish linear separability; for masked models, moreover, the affine translator still feeds a nonlinear MLM head. Testing this picture requires aligned concept identities and direct spectral measurements across checkpoints.

\FloatBarrier

\section{Conclusion}
\label{sec:conclusion}

We introduced checkpoint-wise interventions for reconstruction, fixed-head decodability, downstream behavior, and learned-alignment transfer. Across seven models, they identify compact output-preserving soft masks enriched for supervised sensitivity and structured temporal, depth, and decomposition profiles. The analytic relations explain why the monitors agree without being interchangeable. Concept dynamics thus turns static decompositions into functional longitudinal measurements with explicit preservation targets.

\section*{Limitations}

The study spans seven pretrained models with one run per model, and checkpoint, layer, method, and target cells are correlated repeated measurements rather than independent replications. Differences among model families jointly reflect pretraining data, tokenization, objective, architecture, and checkpoint schedule. Evaluation observes one target-token prediction per sequence; extending the conclusions to sequence generation, instruction following, in-context learning, multilingual downstream tasks, or substantially larger models requires dedicated evaluation.

Functional sufficiency is conditional on the chosen decomposition, preservation target, sparsity operating point, and intervention operator. It does not make a selected direction uniquely human-interpretable: redundant or distributed coordinates can preserve the same behavior. Masked reconstruction can also move an activation away from naturally occurring states, so an output change may combine removed information with sensitivity to intervention geometry. Soft occupancy should consequently be read as target-conditional retained capacity, not intrinsic dimensionality.

Checkpoint profiles summarize population-level effects on a scheduled grid rather than persistence of individual concept identities. Learned transfer alignment measures how well earlier codes can reconstruct a later activation through the fitted map; success can reflect alignment flexibility as well as continuity. Finally, the gate scores are local, supervised, first-order sensitivities: $\tau$ is a second moment and $\omega$ its mean-aligned component, not a realized training update or Fisher eigenvalue. These boundaries limit mechanistic interpretations while leaving the paired intervention comparisons intact.

\paragraph{Scope of the sufficiency claim.}
In this work, sufficiency is operational and target-conditional: a mask is evaluated by the preservation loss
obtained at the single regularization setting used in the experiments. We do not estimate a universal fidelity
threshold, a first checkpoint at which that threshold is crossed, or a persistent identity-matched component
trajectory. Checkpoint profiles should therefore be interpreted as changes in fitted preservation operating
points rather than as identity-tracked onset times.

\section*{Ethical Considerations}

This work analyzes public pretrained models and text corpora and does not involve human participants or deploy a new generative system. Its intended use is to improve understanding of learning and generalization through training-time monitoring. The source corpora and models may nevertheless inherit social biases, offensive material, or memorized personal information. Intervention fidelity is not a certificate of fairness, privacy, robustness, or safety and should not be used as one. The analysis reuses existing checkpoints but incurs additional activation, decomposition, and intervention compute.

\section*{Use of Large Language Models}

During preparation of this manuscript, OpenAI GPT-5.5 and GPT-5.6 were used as assistive tools for editing author-written prose, suggesting limited debugging or optimization changes, organizing large result tables, and suggesting missing citations. The model played no role in formulating the research hypotheses, designing the experiments, developing the original experimental codebase, or independently interpreting the results. All scientific decisions, methodologies, analyses, and conclusions were developed and verified by the authors.

\bibliography{custom}

@article{amari1998natural,
  title = {Natural Gradient Works Efficiently in Learning},
  author = {Amari, Shun-ichi},
  journal = {Neural Computation},
  volume = {10},
  number = {2},
  pages = {251--276},
  year = {1998},
  doi = {10.1162/089976698300017746},
  url = {https://doi.org/10.1162/089976698300017746}
}

@inproceedings{kingma2015adam,
  title = {{Adam}: A Method for Stochastic Optimization},
  author = {Kingma, Diederik P. and Ba, Jimmy},
  booktitle = {3rd International Conference on Learning Representations},
  year = {2015},
  url = {https://arxiv.org/abs/1412.6980}
}

@inproceedings{loshchilov2019decoupled,
  title = {Decoupled Weight Decay Regularization},
  author = {Loshchilov, Ilya and Hutter, Frank},
  booktitle = {7th International Conference on Learning Representations},
  publisher = {OpenReview.net},
  year = {2019},
  url = {https://openreview.net/forum?id=Bkg6RiCqY7}
}

@article{bricken2023monosemanticity,
  title = {Towards Monosemanticity: Decomposing Language Models With Dictionary Learning},
  author = {Bricken, Trenton and Templeton, Adly and Batson, Joshua and Chen, Brian and Jermyn, Adam and Conerly, Tom and Turner, Nick and Anil, Cem and Denison, Carson and Askell, Amanda and Lasenby, Robert and Wu, Yifan and Kravec, Shauna and Schiefer, Nicholas and Maxwell, Tim and Joseph, Nicholas and Hatfield-Dodds, Zac and Tamkin, Alex and Nguyen, Karina and McLean, Brayden and Burke, Josiah E. and Hume, Tristan and Carter, Shan and Henighan, Tom and Olah, Christopher},
  year = {2023},
  journal = {Transformer Circuits Thread},
  url = {https://transformer-circuits.pub/2023/monosemantic-features/}
}

@inproceedings{collins2018deep,
  title = {Deep Feature Factorization for Concept Discovery},
  author = {Collins, Edo and Achanta, Radhakrishna and S{\"u}sstrunk, Sabine},
  booktitle = {Computer Vision -- ECCV 2018},
  series = {Lecture Notes in Computer Science},
  volume = {11218},
  pages = {352--368},
  year = {2018},
  publisher = {Springer},
  doi = {10.1007/978-3-030-01264-9_21},
  url = {https://doi.org/10.1007/978-3-030-01264-9_21}
}

@misc{bernas2026revisitinganisotropy,
  title = {Revisiting Anisotropy in Language Transformers: The Geometry of Learning Dynamics},
  author = {Bernas, Raphael and Jourdan, Fanny and Poch{\'e}, Antonin and Hudelot, C{\'e}line},
  year = {2026},
  eprint = {2604.08764},
  archivePrefix = {arXiv},
  primaryClass = {cs.CL},
  note = {Accepted at the 43rd International Conference on Machine Learning (ICML 2026)},
  doi = {10.48550/arXiv.2604.08764},
  url = {https://arxiv.org/abs/2604.08764}
}

@inproceedings{braun2024functionally,
  title = {Identifying Functionally Important Features with End-to-End Sparse Dictionary Learning},
  author = {Braun, Dan and Taylor, Jordan and Goldowsky-Dill, Nicholas and Sharkey, Lee},
  booktitle = {Advances in Neural Information Processing Systems},
  volume = {37},
  pages = {107286--107325},
  year = {2024},
  doi = {10.52202/079017-3408},
  url = {https://proceedings.neurips.cc/paper_files/paper/2024/hash/c212c1b88395ab68d5e1671c17883ec6-Abstract-Conference.html}
}

@article{bair2006prediction,
  title = {Prediction by Supervised Principal Components},
  author = {Bair, Eric and Hastie, Trevor and Paul, Debashis and Tibshirani, Robert},
  journal = {Journal of the American Statistical Association},
  volume = {101},
  number = {473},
  pages = {119--137},
  year = {2006},
  doi = {10.1198/016214505000000628},
  url = {https://doi.org/10.1198/016214505000000628}
}

@article{eckart1936approximation,
  title = {The Approximation of One Matrix by Another of Lower Rank},
  author = {Eckart, Carl and Young, Gale},
  journal = {Psychometrika},
  volume = {1},
  number = {3},
  pages = {211--218},
  year = {1936},
  doi = {10.1007/BF02288367},
  url = {https://doi.org/10.1007/BF02288367}
}

@inproceedings{liu2021group,
  title = {Group {Fisher} Pruning for Practical Network Compression},
  author = {Liu, Liyang and Zhang, Shilong and Kuang, Zhanghui and Zhou, Aojun and Xue, Jing-Hao and Wang, Xinjiang and Chen, Yimin and Yang, Wenming and Liao, Qingmin and Zhang, Wayne},
  booktitle = {Proceedings of the 38th International Conference on Machine Learning},
  pages = {7021--7032},
  year = {2021},
  volume = {139},
  series = {Proceedings of Machine Learning Research},
  publisher = {PMLR},
  url = {https://proceedings.mlr.press/v139/liu21ab.html}
}

@inproceedings{roy2007effective,
  title = {The Effective Rank: A Measure of Effective Dimensionality},
  author = {Roy, Olivier and Vetterli, Martin},
  booktitle = {15th European Signal Processing Conference},
  pages = {606--610},
  year = {2007},
  publisher = {EURASIP},
  url = {https://eurasip.org/Proceedings/Eusipco/Eusipco2007/Papers/a5p-h05.pdf}
}

@misc{chatterjee2020coherence,
  title = {Making Coherence Out of Nothing At All: Measuring the Evolution of Gradient Alignment},
  author = {Chatterjee, Satrajit and Zielinski, Piotr},
  year = {2020},
  eprint = {2008.01217},
  archivePrefix = {arXiv},
  primaryClass = {cs.LG},
  doi = {10.48550/arXiv.2008.01217},
  url = {https://arxiv.org/abs/2008.01217}
}

@inproceedings{cunningham2023sparse,
  title = {Sparse Autoencoders Find Highly Interpretable Features in Language Models},
  author = {Huben, Robert and Cunningham, Hoagy and Smith, Logan Riggs and Ewart, Aidan and Sharkey, Lee},
  booktitle = {The Twelfth International Conference on Learning Representations},
  year = {2024},
  url = {https://openreview.net/forum?id=F76bwRSLeK}
}

@inproceedings{biderman2023pythia,
  title = {Pythia: A Suite for Analyzing Large Language Models Across Training and Scaling},
  author = {Biderman, Stella and Schoelkopf, Hailey and Anthony, Quentin Gregory and Bradley, Herbie and O'Brien, Kyle and Hallahan, Eric and Khan, Mohammad Aflah and Purohit, Shivanshu and Prashanth, Usvsn Sai and Raff, Edward and Skowron, Aviya and Sutawika, Lintang and Van Der Wal, Oskar},
  booktitle = {Proceedings of the 40th International Conference on Machine Learning},
  pages = {2397--2430},
  year = {2023},
  volume = {202},
  series = {Proceedings of Machine Learning Research},
  publisher = {PMLR},
  url = {https://proceedings.mlr.press/v202/biderman23a.html}
}

@article{ding2010convex,
  title = {Convex and Semi-Nonnegative Matrix Factorizations},
  author = {Ding, Chris H. Q. and Li, Tao and Jordan, Michael I.},
  journal = {IEEE Transactions on Pattern Analysis and Machine Intelligence},
  volume = {32},
  number = {1},
  pages = {45--55},
  year = {2010},
  doi = {10.1109/TPAMI.2008.277},
  url = {https://doi.org/10.1109/TPAMI.2008.277}
}

@inproceedings{diehl-martinez-etal-2024-tending,
  title = {Tending Towards Stability: Convergence Challenges in Small Language Models},
  author = {Diehl Martinez, Richard and Lesci, Pietro and Buttery, Paula},
  booktitle = {Findings of the Association for Computational Linguistics: EMNLP 2024},
  pages = {3275--3286},
  year = {2024},
  address = {Miami, Florida, USA},
  publisher = {Association for Computational Linguistics},
  doi = {10.18653/v1/2024.findings-emnlp.187},
  url = {https://aclanthology.org/2024.findings-emnlp.187/}
}

@article{elazar-etal-2021-amnesic,
  title = {Amnesic Probing: Behavioral Explanation with Amnesic Counterfactuals},
  author = {Elazar, Yanai and Ravfogel, Shauli and Jacovi, Alon and Goldberg, Yoav},
  journal = {Transactions of the Association for Computational Linguistics},
  volume = {9},
  pages = {160--175},
  year = {2021},
  publisher = {MIT Press},
  doi = {10.1162/tacl_a_00359},
  url = {https://aclanthology.org/2021.tacl-1.10/}
}

@article{hyvarinen2000independent,
  title = {Independent Component Analysis: Algorithms and Applications},
  author = {Hyv{\"a}rinen, Aapo and Oja, Erkki},
  journal = {Neural Networks},
  volume = {13},
  number = {4--5},
  pages = {411--430},
  year = {2000},
  doi = {10.1016/S0893-6080(00)00026-5},
  url = {https://doi.org/10.1016/S0893-6080(00)00026-5}
}

@article{facco2017estimating,
  title = {Estimating the Intrinsic Dimension of Datasets by a Minimal Neighborhood Information},
  author = {Facco, Elena and d'Errico, Maria and Rodriguez, Alex and Laio, Alessandro},
  journal = {Scientific Reports},
  volume = {7},
  number = {1},
  pages = {12140},
  year = {2017},
  doi = {10.1038/s41598-017-11873-y},
  url = {https://www.nature.com/articles/s41598-017-11873-y}
}

@inproceedings{fel2023craft,
  title = {{CRAFT}: Concept Recursive Activation FacTorization for Explainability},
  author = {Fel, Thomas and Picard, Agustin and B{\'e}thune, Louis and Boissin, Thibaut and Vigouroux, David and Colin, Julien and Cad{\`e}ne, R{\'e}mi and Serre, Thomas},
  booktitle = {Proceedings of the IEEE/CVF Conference on Computer Vision and Pattern Recognition},
  pages = {2711--2721},
  year = {2023},
  publisher = {IEEE},
  doi = {10.1109/CVPR52729.2023.00266},
  url = {https://openaccess.thecvf.com/content/CVPR2023/html/Fel_CRAFT_Concept_Recursive_Activation_FacTorization_for_Explainability_CVPR_2023_paper.html}
}

@inproceedings{fel2023holistic,
  title = {A Holistic Approach to Unifying Automatic Concept Extraction and Concept Importance Estimation},
  author = {Fel, Thomas and Boutin, Victor and B{\'e}thune, Louis and Cad{\`e}ne, R{\'e}mi and Moayeri, Mazda and And{\'e}ol, L{\'e}o and Chalvidal, Mathieu and Serre, Thomas},
  booktitle = {Advances in Neural Information Processing Systems},
  volume = {36},
  pages = {54805--54818},
  year = {2023},
  publisher = {Curran Associates, Inc.},
  doi = {10.52202/075280-2391},
  url = {https://proceedings.neurips.cc/paper_files/paper/2023/hash/abf3682c9cf9245a0294a4bebe4544ff-Abstract-Conference.html}
}

@book{jolliffe2002principal,
  title = {Principal Component Analysis},
  author = {Jolliffe, Ian T.},
  edition = {2nd},
  year = {2002},
  publisher = {Springer},
  address = {New York},
  doi = {10.1007/b98835},
  url = {https://link.springer.com/book/10.1007/b98835}
}

@inproceedings{jourdan-etal-2023-cockatiel,
  title = {{COCKATIEL}: {CO}ntinuous Concept ran{K}ed {AT}tribution with Interpretable {EL}ements for explaining neural net classifiers on {NLP}},
  author = {Jourdan, Fanny and Picard, Agustin and Fel, Thomas and Risser, Laurent and Loubes, Jean-Michel and Asher, Nicholas},
  booktitle = {Findings of the Association for Computational Linguistics: ACL 2023},
  pages = {5120--5136},
  year = {2023},
  address = {Toronto, Canada},
  publisher = {Association for Computational Linguistics},
  doi = {10.18653/v1/2023.findings-acl.317},
  url = {https://aclanthology.org/2023.findings-acl.317/}
}

@article{geiger2025causal,
  title = {Causal Abstraction: A Theoretical Foundation for Mechanistic Interpretability},
  author = {Geiger, Atticus and Ibeling, Duligur and Zur, Amir and Chaudhary, Maheep and Chauhan, Sonakshi and Huang, Jing and Arora, Aryaman and Wu, Zhengxuan and Goodman, Noah and Potts, Christopher and Icard, Thomas},
  journal = {Journal of Machine Learning Research},
  volume = {26},
  number = {83},
  pages = {1--64},
  year = {2025},
  url = {https://www.jmlr.org/papers/v26/23-0058.html}
}

@article{graziani2023uncovering,
  title = {Uncovering Unique Concept Vectors through Latent Space Decomposition},
  author = {Graziani, Mara and O'Mahony, Laura and Nguyen, An-Phi and M{\"u}ller, Henning and Andrearczyk, Vincent},
  journal = {Transactions on Machine Learning Research},
  year = {2023},
  issn = {2835-8856},
  url = {https://openreview.net/forum?id=LT4DXqUJTD}
}

@inproceedings{nanda2023progress,
  title = {Progress Measures for Grokking via Mechanistic Interpretability},
  author = {Nanda, Neel and Chan, Lawrence and Lieberum, Tom and Smith, Jess and Steinhardt, Jacob},
  booktitle = {International Conference on Learning Representations},
  year = {2023},
  url = {https://openreview.net/forum?id=9XFSbDPmdW}
}

@article{martens2020new,
  title = {New Insights and Perspectives on the Natural Gradient Method},
  author = {Martens, James},
  journal = {Journal of Machine Learning Research},
  volume = {21},
  number = {146},
  pages = {1--76},
  year = {2020},
  url = {https://jmlr.csail.mit.edu/papers/v21/17-678.html}
}

@inproceedings{olmo2025features,
  title = {Features that Make a Difference: Leveraging Gradients for Improved Dictionary Learning},
  author = {Olmo, Jeffrey and Wilson, Jared and Forsey, Max and Hepner, Bryce and Howe, Thomas Vincent and Wingate, David},
  booktitle = {Findings of the Association for Computational Linguistics: NAACL 2025},
  pages = {7624--7634},
  year = {2025},
  address = {Albuquerque, New Mexico},
  publisher = {Association for Computational Linguistics},
  doi = {10.18653/v1/2025.findings-naacl.423},
  url = {https://aclanthology.org/2025.findings-naacl.423/}
}

@inproceedings{hewitt-liang-2019-designing,
  title = {Designing and Interpreting Probes with Control Tasks},
  author = {Hewitt, John and Liang, Percy},
  booktitle = {Proceedings of the 2019 Conference on Empirical Methods in Natural Language Processing and the 9th International Joint Conference on Natural Language Processing},
  pages = {2733--2743},
  year = {2019},
  address = {Hong Kong, China},
  publisher = {Association for Computational Linguistics},
  doi = {10.18653/v1/D19-1275},
  url = {https://aclanthology.org/D19-1275/}
}

@inproceedings{kim2018interpretability,
  title = {Interpretability Beyond Feature Attribution: Quantitative Testing with Concept Activation Vectors ({TCAV})},
  author = {Kim, Been and Wattenberg, Martin and Gilmer, Justin and Cai, Carrie and Wexler, James and Vi{\'e}gas, Fernanda and Sayres, Rory},
  booktitle = {Proceedings of the 35th International Conference on Machine Learning},
  pages = {2668--2677},
  year = {2018},
  volume = {80},
  series = {Proceedings of Machine Learning Research},
  publisher = {PMLR},
  url = {https://proceedings.mlr.press/v80/kim18d.html}
}

@inproceedings{kornblith2019similarity,
  title = {Similarity of Neural Network Representations Revisited},
  author = {Kornblith, Simon and Norouzi, Mohammad and Lee, Honglak and Hinton, Geoffrey},
  booktitle = {Proceedings of the 36th International Conference on Machine Learning},
  pages = {3519--3529},
  year = {2019},
  volume = {97},
  series = {Proceedings of Machine Learning Research},
  publisher = {PMLR},
  url = {https://proceedings.mlr.press/v97/kornblith19a.html}
}

@inproceedings{kunstner2019limitations,
  title = {Limitations of the Empirical {Fisher} Approximation for Natural Gradient Descent},
  author = {Kunstner, Frederik and Balles, Lukas and Hennig, Philipp},
  booktitle = {Advances in Neural Information Processing Systems},
  volume = {32},
  year = {2019},
  url = {https://proceedings.neurips.cc/paper/2019/hash/46a558d97954d0692411c861cf78ef79-Abstract.html}
}

@article{lee1999learning,
  title = {Learning the Parts of Objects by Non-Negative Matrix Factorization},
  author = {Lee, Daniel D. and Seung, H. Sebastian},
  journal = {Nature},
  volume = {401},
  pages = {788--791},
  year = {1999},
  doi = {10.1038/44565},
  url = {https://www.nature.com/articles/44565}
}

@inproceedings{poche-etal-2025-consim,
  title = {{ConSim}: Measuring Concept-Based Explanations' Effectiveness with Automated Simulatability},
  author = {Poch{\'e}, Antonin and Jacovi, Alon and Picard, Agustin Martin and Boutin, Victor and Jourdan, Fanny},
  booktitle = {Proceedings of the 63rd Annual Meeting of the Association for Computational Linguistics (Volume 1: Long Papers)},
  pages = {5594--5615},
  year = {2025},
  address = {Vienna, Austria},
  publisher = {Association for Computational Linguistics},
  doi = {10.18653/v1/2025.acl-long.279},
  url = {https://aclanthology.org/2025.acl-long.279/}
}

@inproceedings{raghu2017svcca,
  title = {{SVCCA}: Singular Vector Canonical Correlation Analysis for Deep Learning Dynamics and Interpretability},
  author = {Raghu, Maithra and Gilmer, Justin and Yosinski, Jason and Sohl-Dickstein, Jascha},
  booktitle = {Advances in Neural Information Processing Systems},
  volume = {30},
  pages = {6076--6085},
  year = {2017},
  url = {https://dl.acm.org/doi/10.5555/3295222.3295356}
}

@inproceedings{razzhigaev2024shapelearninganisotropyintrinsic,
  title = {The Shape of Learning: Anisotropy and Intrinsic Dimensions in Transformer-Based Models},
  author = {Razzhigaev, Anton and Mikhalchuk, Matvey and Goncharova, Elizaveta and Oseledets, Ivan and Dimitrov, Denis and Kuznetsov, Andrey},
  booktitle = {Findings of the Association for Computational Linguistics: EACL 2024},
  pages = {868--874},
  year = {2024},
  address = {St. Julian's, Malta},
  publisher = {Association for Computational Linguistics},
  doi = {10.18653/v1/2024.findings-eacl.58},
  url = {https://aclanthology.org/2024.findings-eacl.58/}
}

@inproceedings{syed2024attribution,
  title = {Attribution Patching Outperforms Automated Circuit Discovery},
  author = {Syed, Aaquib and Rager, Can and Conmy, Arthur},
  booktitle = {Proceedings of the 7th BlackboxNLP Workshop: Analyzing and Interpreting Neural Networks for NLP},
  pages = {407--416},
  year = {2024},
  address = {Miami, Florida, US},
  publisher = {Association for Computational Linguistics},
  doi = {10.18653/v1/2024.blackboxnlp-1.25},
  url = {https://aclanthology.org/2024.blackboxnlp-1.25/}
}

@inproceedings{inaba-etal-2025-bilingual,
  title = {How a Bilingual {LM} Becomes Bilingual: Tracing Internal Representations with Sparse Autoencoders},
  author = {Inaba, Tatsuro and Kamoda, Go and Inui, Kentaro and Isonuma, Masaru and Miyao, Yusuke and Oseki, Yohei and Takagi, Yu and Heinzerling, Benjamin},
  booktitle = {Findings of the Association for Computational Linguistics: EMNLP 2025},
  pages = {13458--13470},
  year = {2025},
  address = {Suzhou, China},
  publisher = {Association for Computational Linguistics},
  doi = {10.18653/v1/2025.findings-emnlp.725},
  url = {https://aclanthology.org/2025.findings-emnlp.725/}
}

@misc{wang2026fishback,
  title = {{FishBack}: Pullback {Fisher} Geometry for Optimal Activation Steering in Transformers},
  author = {Wang, Sihan and Zhao, Jiayi},
  year = {2026},
  eprint = {2605.17231},
  archivePrefix = {arXiv},
  primaryClass = {cs.LG},
  doi = {10.48550/arXiv.2605.17231},
  url = {https://arxiv.org/abs/2605.17231}
}

@misc{xu2024trackingdynamics,
  title = {Tracking the Feature Dynamics in {LLM} Training: A Mechanistic Study},
  author = {Xu, Yang and Wang, Yi and Huang, Hengguan and Wang, Hao},
  year = {2024},
  eprint = {2412.17626},
  archivePrefix = {arXiv},
  primaryClass = {cs.LG},
  doi = {10.48550/arXiv.2412.17626},
  url = {https://arxiv.org/abs/2412.17626}
}

@article{zhang2021invertible,
  title = {Invertible Concept-based Explanations for {CNN} Models with Non-negative Concept Activation Vectors},
  author = {Zhang, Ruihan and Madumal, Prashan and Miller, Tim and Ehinger, Krista A. and Rubinstein, Benjamin I. P.},
  journal = {Proceedings of the AAAI Conference on Artificial Intelligence},
  volume = {35},
  number = {13},
  pages = {11682--11690},
  year = {2021},
  doi = {10.1609/aaai.v35i13.17389},
  url = {https://ojs.aaai.org/index.php/AAAI/article/view/17389}
}

@inproceedings{groeneveld-etal-2024-olmo,
  title = {{OLM}o: Accelerating the Science of Language Models},
  author = {Groeneveld, Dirk and Beltagy, Iz and Walsh, Evan and Bhagia, Akshita and Kinney, Rodney and Tafjord, Oyvind and Jha, Ananya and Ivison, Hamish and Magnusson, Ian and Wang, Yizhong and Arora, Shane and Atkinson, David and Authur, Russell and Chandu, Khyathi and Cohan, Arman and Dumas, Jennifer and Elazar, Yanai and Gu, Yuling and Hessel, Jack and Khot, Tushar and Merrill, William and Morrison, Jacob and Muennighoff, Niklas and Naik, Aakanksha and Nam, Crystal and Peters, Matthew and Pyatkin, Valentina and Ravichander, Abhilasha and Schwenk, Dustin and Shah, Saurabh and Smith, William and Strubell, Emma and Subramani, Nishant and Wortsman, Mitchell and Dasigi, Pradeep and Lambert, Nathan and Richardson, Kyle and Zettlemoyer, Luke and Dodge, Jesse and Lo, Kyle and Soldaini, Luca and Smith, Noah and Hajishirzi, Hannaneh},
  booktitle = {Proceedings of the 62nd Annual Meeting of the Association for Computational Linguistics (Volume 1: Long Papers)},
  pages = {15789--15809},
  year = {2024},
  address = {Bangkok, Thailand},
  publisher = {Association for Computational Linguistics},
  doi = {10.18653/v1/2024.acl-long.841},
  url = {https://aclanthology.org/2024.acl-long.841/}
}

@misc{allenai2024olmo1b0724,
  author = {{Allen Institute for AI}},
  title = {Model Card for {OLMo} {1B} {July} 2024},
  year = {2024},
  howpublished = {Hugging Face model repository},
  url = {https://huggingface.co/allenai/OLMo-1B-0724-hf},
  note = {Repository identifier: allenai/OLMo-1B-0724-hf; last accessed 1 August 2026}
}

@misc{eleutherai2023pythia70m,
  author = {{EleutherAI}},
  title = {Model Card for {Pythia-70M}},
  year = {2023},
  howpublished = {Hugging Face model repository},
  url = {https://huggingface.co/EleutherAI/pythia-70m},
  note = {Repository identifier: EleutherAI/pythia-70m; last accessed 1 August 2026}
}

@misc{eleutherai2023pythia160m,
  author = {{EleutherAI}},
  title = {Model Card for {Pythia-160M}},
  year = {2023},
  howpublished = {Hugging Face model repository},
  url = {https://huggingface.co/EleutherAI/pythia-160m},
  note = {Repository identifier: EleutherAI/pythia-160m; last accessed 1 August 2026}
}

@misc{eleutherai2023pythia410m,
  author = {{EleutherAI}},
  title = {Model Card for {Pythia-410M}},
  year = {2023},
  howpublished = {Hugging Face model repository},
  url = {https://huggingface.co/EleutherAI/pythia-410m},
  note = {Repository identifier: EleutherAI/pythia-410m; last accessed 1 August 2026}
}

@inproceedings{boizard2025eurobert,
  title = {{EuroBERT}: Scaling Multilingual Encoders for {European} Languages},
  author = {Boizard, Nicolas and Gisserot-Boukhlef, Hippolyte and Alves, Duarte M. and Martins, Andr{\'e} F. T. and Hammal, Ayoub and Corro, Caio and Hudelot, C{\'e}line and Malherbe, Emmanuel and Malaboeuf, Etienne and Jourdan, Fanny and Hautreux, Gabriel and Alves, Jo{\~a}o and El-Haddad, Kevin and Faysse, Manuel and Peyrard, Maxime and Guerreiro, Nuno M. and Fernandes, Patrick and Rei, Ricardo and Colombo, Pierre},
  booktitle = {Second Conference on Language Modeling},
  pages = {1--28},
  year = {2025},
  address = {Montreal, Canada},
  url = {https://openreview.net/forum?id=jdOC24msVq}
}

@inproceedings{antoun-etal-2025-modernbert,
  title = {{M}odern{BERT} or {D}e{BERT}a{V}3? Examining Architecture and Data Influence on Transformer Encoder Models Performance},
  author = {Antoun, Wissam and Sagot, Beno{\^i}t and Seddah, Djam{\'e}},
  booktitle = {Proceedings of the 14th International Joint Conference on Natural Language Processing and the 4th Conference of the Asia-Pacific Chapter of the Association for Computational Linguistics},
  pages = {3061--3074},
  year = {2025},
  address = {Mumbai, India},
  publisher = {The Asian Federation of Natural Language Processing and The Association for Computational Linguistics},
  doi = {10.18653/v1/2025.ijcnlp-long.164},
  url = {https://aclanthology.org/2025.ijcnlp-long.164/}
}

@misc{eurobert2025modelcard210m,
  author = {{EuroBERT Team}},
  title = {Model Card for {EuroBERT-210m}},
  year = {2025},
  howpublished = {Hugging Face model repository},
  url = {https://huggingface.co/EuroBERT/EuroBERT-210m},
  note = {Repository identifier: EuroBERT/EuroBERT-210m; last accessed 1 August 2026}
}

@misc{eurobert2025modelcard610m,
  author = {{EuroBERT Team}},
  title = {Model Card for {EuroBERT-610m}},
  year = {2025},
  howpublished = {Hugging Face model repository},
  url = {https://huggingface.co/EuroBERT/EuroBERT-610m},
  note = {Repository identifier: EuroBERT/EuroBERT-610m; last accessed 1 August 2026}
}

@misc{almanach2025moderncamembertckpts,
  author = {{ALMAnaCH, Inria}},
  title = {{ModernCamemBERT} Checkpoint Repository},
  year = {2025},
  howpublished = {Hugging Face model repository},
  url = {https://huggingface.co/almanach/moderncamembert-base-ckpts},
  note = {Repository identifier: almanach/moderncamembert-base-ckpts; the repository provides no model card; last accessed 1 August 2026}
}

@misc{almanach2025moderncamembert,
  author = {{ALMAnaCH, Inria}},
  title = {Model Card for {ModernCamemBERT-base}},
  year = {2025},
  howpublished = {Hugging Face model repository},
  url = {https://huggingface.co/almanach/moderncamembert-base},
  note = {Repository identifier: almanach/moderncamembert-base; last accessed 1 August 2026}
}

@article{raffel2020t5,
  author = {Raffel, Colin and Shazeer, Noam and Roberts, Adam and Lee, Katherine and Narang, Sharan and Matena, Michael and Zhou, Yanqi and Li, Wei and Liu, Peter J.},
  title = {Exploring the Limits of Transfer Learning with a Unified Text-to-Text Transformer},
  journal = {Journal of Machine Learning Research},
  year = {2020},
  volume = {21},
  number = {140},
  pages = {1--67},
  url = {https://jmlr.org/papers/v21/20-074.html}
}

@inproceedings{merity2017pointer,
  title = {Pointer Sentinel Mixture Models},
  author = {Merity, Stephen and Xiong, Caiming and Bradbury, James and Socher, Richard},
  booktitle = {International Conference on Learning Representations},
  year = {2017},
  url = {https://openreview.net/forum?id=Byj72udxe}
}

@misc{allenai2026c4card,
  author = {{Allen Institute for AI}},
  title = {Dataset Card for {C4}},
  year = {2026},
  howpublished = {Hugging Face dataset repository},
  url = {https://huggingface.co/datasets/allenai/c4},
  note = {Repository identifier: allenai/c4; last accessed 1 August 2026}
}

@misc{salesforce2026wikitextcard,
  author = {{Salesforce AI Research}},
  title = {Dataset Card for {WikiText}},
  year = {2026},
  howpublished = {Hugging Face dataset repository},
  url = {https://huggingface.co/datasets/Salesforce/wikitext},
  note = {Repository identifier: Salesforce/wikitext; last accessed 1 August 2026}
}

@inproceedings{penedo2024fineweb,
  author = {Penedo, Guilherme and Kydl{\'i}\v{c}ek, Hynek and Ben Allal, Loubna and Lozhkov, Anton and Mitchell, Margaret and Raffel, Colin and von Werra, Leandro and Wolf, Thomas},
  title = {The {FineWeb} Datasets: Decanting the Web for the Finest Text Data at Scale},
  booktitle = {Advances in Neural Information Processing Systems},
  volume = {37},
  pages = {30811--30849},
  year = {2024},
  publisher = {Curran Associates, Inc.},
  doi = {10.52202/079017-0970},
  url = {https://proceedings.neurips.cc/paper_files/paper/2024/hash/370df50ccfdf8bde18f8f9c2d9151bda-Abstract-Datasets_and_Benchmarks_Track.html}
}

@misc{lozhkov2024fineweb-edu,
  author = {Lozhkov, Anton and Ben Allal, Loubna and von Werra, Leandro and Wolf, Thomas},
  title = {{FineWeb-Edu}: The Finest Collection of Educational Content},
  year = {2024},
  publisher = {Hugging Face},
  doi = {10.57967/hf/2497},
  url = {https://huggingface.co/datasets/HuggingFaceFW/fineweb-edu},
  note = {Dataset card, repository identifier: HuggingFaceFW/fineweb-edu; last accessed 1 August 2026}
}

@misc{wikimedia2023wikipedia,
  author = {{Wikimedia Foundation}},
  title = {Dataset Card for {Wikipedia}},
  year = {2023},
  howpublished = {Hugging Face dataset repository},
  url = {https://huggingface.co/datasets/wikimedia/wikipedia},
  note = {Repository identifier: wikimedia/wikipedia; configurations 20231101.en and 20231101.fr, derived from \url{https://dumps.wikimedia.org/}; last accessed 1 August 2026}
}

@misc{mccandlish2018empirical,
  title = {An Empirical Model of Large-Batch Training},
  author = {McCandlish, Sam and Kaplan, Jared and Amodei, Dario and {OpenAI Dota Team}},
  year = {2018},
  eprint = {1812.06162},
  archivePrefix = {arXiv},
  primaryClass = {cs.LG},
  doi = {10.48550/arXiv.1812.06162},
  url = {https://arxiv.org/abs/1812.06162}
}

\appendix

\setcounter{figure}{0}
\renewcommand{\thefigure}{A\arabic{figure}}
\setcounter{table}{0}
\renewcommand{\thetable}{A\arabic{table}}

\section{Experimental Conditions}
\label{app:experimental_conditions}

\paragraph{Models, selection, and checkpoint grid.}
The exact artifacts are \texttt{EleutherAI/pythia-\{70m,160m,410m\}} \citep{biderman2023pythia,eleutherai2023pythia70m,eleutherai2023pythia160m,eleutherai2023pythia410m}, \texttt{allenai/OLMo-1B-0724-hf} \citep{groeneveld-etal-2024-olmo,allenai2024olmo1b0724}, \texttt{EuroBERT/EuroBERT-\{210m,610m\}} \citep{boizard2025eurobert,eurobert2025modelcard210m,eurobert2025modelcard610m}, and \texttt{almanach/moderncamembert-base-ckpts} \citep{antoun-etal-2025-modernbert,almanach2025moderncamembertckpts}. The Pythia sizes, EuroBERT sizes, and OLMo were selected before intervention effects were inspected; ModernCamemBERT was the first prespecified technical contingency. Both the primary and contingency runs completed, and all seven are reported. No model was retained or discarded because of its measured effect.

\begin{table*}[t]
\centering
\small
\setlength{\tabcolsep}{3pt}
\begin{tabular}{@{}llp{6.25cm}p{3.65cm}r@{}}
\toprule
Model & LM type & Checkpoints (thousands of updates) & MLP block indices & $K_{\max}$ \\
\midrule
Pythia-70M  & causal & 1, 10, 20, 29, 39, 48, 58, 67, 77, 86, 96, 105, 115, 124, 134, 143 & 0, 1, 2, 3, 4, 5 & 257 \\
Pythia-160M & causal & 1, 20, 39, 58, 86, 105, 124, 143 & 0, 2, 5, 6, 9, 11 & 385 \\
Pythia-410M & causal & 10, 26, 46, 62, 86, 102, 122, 138 & 0, 3, 10, 13, 20, 23 & 513 \\
OLMo-1B & causal & 1, 195, 388, 582, 873, 1067, 1260, 1454 & 0, 2, 6, 9, 13, 15 & 1025 \\
EuroBERT-210M & masked & 10, 70, 140, 200, 290, 350, 420, 480 & 0, 2, 5, 6, 9, 11 & 385 \\
EuroBERT-610M & masked & 10, 70, 140, 200, 290, 350, 420, 480 & 0, 4, 11, 14, 21, 25 & 577 \\
ModernCamem\-BERT & masked & 5, 15, 25, 60, 75, 115, 165, 211.961 & 0, 3, 9, 12, 18, 21 & 385 \\
\bottomrule
\end{tabular}
\caption{Executed model, checkpoint, layer, and candidate-budget grid. Checkpoints denote optimizer updates, not a common number of tokens or common training fraction.}
\label{tab:experimental_grid}
\end{table*}

Six MLP outputs, including both depth endpoints, are sampled per model. Pythia-70M was run at 16 checkpoints; balanced analyses use its endpoint-preserving subset $\{1,20,39,58,86,105,124,143\}$ so that every model contributes eight checkpoint ranks. The full bundle contains 6,144 intervention cells and 726,528 concept-score rows; the common grid contains 5,376 cells and 677,184 concept-score rows. Artifacts are identified by repository and checkpoint names rather than immutable repository commits. For ModernCamemBERT, the tokenizer from the \texttt{ep0-ba10000} artifact is held fixed across checkpoints and null local RoPE-base fields are restored to the training value 10,000; model weights are unchanged.

\paragraph{Data and evaluated rows.}
We stream the first requested nonempty rows from five sources: 4,000 from C4 English \citep{raffel2020t5,allenai2026c4card}, 3,000 from WikiText-103 \citep{merity2017pointer,salesforce2026wikitextcard}, 4,000 from FineWeb-Edu \citep{penedo2024fineweb,lozhkov2024fineweb-edu}, and 2,000 each from the 2023-11-01 English and French Wikipedia configurations \citep{wikimedia2023wikipedia}. Each source prefix is shuffled and partitioned separately in the same 8:4:3 ratio, using base seed 13 with deterministic source-specific offsets; the partitions are then pooled into disjoint train/calibration/test splits of 8,000/4,000/3,000 sequences. This prefix-based design is fully specified by the configurations, requested counts, seeds, and offsets; separate remote-snapshot and sampled-row manifests were not retained.

Inputs are truncated to 256 tokens, and one activation row is retained for each eligible sequence. For causal LMs, this is the final valid next-token prediction position. For masked LMs, a non-special position is chosen deterministically from the sample identifier and seed and replaced by the mask token. No prompts, demonstrations, or in-context examples are used; the number of in-context examples is zero.

\paragraph{Concept extraction and candidate budget.}
Dictionaries are fitted independently for every model, checkpoint, layer, and method using the available eligible training activation rows. We use uncentered truncated SVD; FastICA with unit-variance whitening and at most 1,000 iterations; NMF with a train-fitted nonnegative shift, NNDSVDa initialization, coordinate descent, and at most 1,000 iterations; and SemiNMF with nonnegative codes, at most 200 alternating updates, and tolerance $10^{-5}$. Randomized procedures use seed 13, and dictionary atoms are unit-normalized with codes rescaled accordingly. TwoNN uses at most 5,000 training rows per checkpoint/layer. Every stored estimate reached the $d_\ell/4$ lower clip in Equation~\eqref{eq:kmax}; the reported $K_{\max}$ values are consequently width-determined.

\paragraph{Mask fitting.}
No hyperparameter search, early stopping, or best-checkpoint selection was performed. All values below were fixed before the reported comparisons; optimizing one mask in each calibration cell is part of the estimator, not hyperparameter selection. Soft gates start from $\alpha_i=0$ and use Adam with learning rate $0.05$, a geometric temperature schedule from $1.0$ to $0.1$, and the unnormalized penalty $10^{-3}\sum_i z_i$. Reconstruction, linear, and transfer masks use 200 calibration updates. For transfer, the distribution objective uses $\mu=1.0$ on the code-distribution KL term; checkpoint alignment uses AdamW with learning rate $10^{-3}$, $\rho=10^{-4}$ decoupled weight decay, and 200 epochs. The memory-safe downstream optimizer uses 100 deterministic cyclic mini-batch updates of 512 calibration rows, with smaller model micro-batches. Held-out interventions use the soft gates; $K_{\mathrm{hard}}$ uses $z_i>0.5$. The auxiliary linear map is trained for 50 epochs with AdamW at learning rate $10^{-3}$ and decoupled weight decay $10^{-4}$.

\paragraph{Scores, evaluation, and numerical conditions.}
Concept scores use the eligible training rows and the gold-label activation gradient; the score manifests contain 7,995 rows per causal model and 8,000 per masked model. For component $i$, its stored activation energy on the $n$ scored rows is exactly
\[
e_i=\frac{1}{n}\lVert U_{:,i}\rVert_2^2\lVert v_i\rVert_2^2
=\frac{1}{n}\lVert U_{:,i}v_i^\top\rVert_F^2.
\]
Although dictionary atoms are numerically unit-normalized and the codes rescaled accordingly, the implementation retains the factor $\lVert v_i\rVert_2^2$; we therefore use the full expression above rather than replacing it by $n^{-1}\lVert U_{:,i}\rVert_2^2$. The other stored statistics are the moments $\tau_i=\E[g_i^2]$, $\omega_i=(\E[g_i])^2$, and $\kappa_i=\tau_i-\omega_i$ from Equation~\eqref{eq:gate_moments}; the audit verifies this decomposition to a maximum scaled error of $1.01\times10^{-7}$. On the held-out test split we report $\Delta\mathrm{KL}$ from the original to the intervened predictive distribution, raw gold-label $\Delta\mathrm{CE}$ and $\Delta\mathrm{Acc}$, variance-normalized activation error, and soft/hard cardinality. The NMSE variance normalizer is floored at $10^{-12}$. Numerically, ICA uses a reversible power-of-two rescaling and a float64 solve; only non-finite EuroBERT float16 gradients receive adaptive inverse loss scaling; and rare negative shifted-NMF calibration and test coordinates are clipped at the fitted nonnegative boundary.

\paragraph{Statistical scope and compute.}
There is one seed per model. Checkpoints, layers, methods, and objectives are structured repeated measurements, not IID replicates; analyses therefore use exact within-cell pairing and descriptive model-level medians or interquartile ranges rather than row-level significance tests. Because $\lambda=10^{-3}$ is shared but not normalized by $K_{\max}$ or objective scale, criterion comparisons describe a single sparsity--fidelity operating point, not an equal-budget or matched-fidelity frontier. Experiments ran on four NVIDIA A100-SXM4 40GB GPUs with an AMD EPYC 7742 CPU and approximately 503GiB RAM.

\paragraph{Software environment and computational record.}
The preserved cluster snapshot records Ubuntu 24.04.2, Python~\texttt{3.12.3}, PyTorch~\texttt{2.12.1+cu126}, Transformers~\texttt{5.12.1}, Datasets~\texttt{5.0.0}, Accelerate~\texttt{1.14.0}, NumPy~\texttt{2.5.0}, SciPy~\texttt{1.18.0}, scikit-learn~\texttt{1.9.0}, pandas~\texttt{3.0.3}, CUDA runtime~\texttt{12.6}, and cuDNN~\texttt{91002}. Source shuffling adds offsets \texttt{C4:0}, \texttt{WikiText-103:4000}, \texttt{FineWeb-Edu:7000}, \texttt{Wikipedia-en:11000}, and \texttt{Wikipedia-fr:13000} to base seed 13, yielding source seeds 13, 4013, 7013, 11013, and 13013. Exact accelerator-hour accounting was not retained; the experiments combined GPU-based activation/intervention computation with substantial CPU-based factorization and analysis.

\paragraph{Artifact licenses and intended use.}
The official release cards state Apache-2.0 for Pythia \citep{eleutherai2023pythia70m,eleutherai2023pythia160m,eleutherai2023pythia410m}, OLMo-1B \citep{allenai2024olmo1b0724}, and EuroBERT \citep{eurobert2025modelcard210m,eurobert2025modelcard610m}. The exact ModernCamemBERT checkpoint repository has no model card or license declaration \citep{almanach2025moderncamembertckpts}; its companion final-model card states MIT \citep{almanach2025moderncamembert}. We do not infer from the latter a separate license for the checkpoint repository. C4 and FineWeb-Edu are released under ODC-By 1.0 and remain subject to the applicable Common Crawl terms \citep{allenai2026c4card,lozhkov2024fineweb-edu}. The WikiText card is internally inconsistent: its repository metadata lists CC BY-SA 3.0 and GFDL, while its prose states CC BY-SA 4.0 \citep{salesforce2026wikitextcard}. The English and French Wikipedia configurations inherit the source-text terms stated on their card, CC BY-SA 3.0 and GFDL, with some text potentially subject only to CC BY-SA or additional terms \citep{wikimedia2023wikipedia}. Our use is limited to research analysis of pretrained checkpoints and sampled corpus text.

\FloatBarrier

\section{Additional Empirical Results}
\label{app:empirical_diagnostics}

This appendix resolves the main results by model, checkpoint, layer, decomposition, and metric. It first establishes the operating points and longitudinal profiles, then examines decomposition and selected-set composition, and finally reports geometry, threshold, and output-metric diagnostics.

\subsection{Operating points and model heterogeneity}

Table~\ref{tab:model-objective-ledger} expands the main dashboard into model-resolved summaries. In every model, downstream selection reaches a more compact soft-mask operating point than reconstruction; its occupancy ranges from $0.6\%$ for OLMo to $46.6\%$ for Pythia-70M. Each row is the median of 192 method--checkpoint--depth cells. Hard-mask empty rates range from zero to $31.2\%$, which motivates reporting continuous soft occupancy alongside thresholded composition. These are fixed-penalty operating points rather than matched-budget comparisons.

\begin{table*}[t]
\centering
\scriptsize
\setlength{\tabcolsep}{3.0pt}
\caption{Model-resolved behavioral and mask profiles for each monitoring objective.}
\label{tab:model-objective-ledger}
\resizebox{\textwidth}{!}{%
\begin{tabular}{@{}llrrrrrrrr@{}}
\toprule
Model & Objective & Cells & $K_{\mathrm{soft}}/K_{\max}$ (\%) & $K_{\mathrm{hard}}/K_{\max}$ (\%) & Rel. KL & Rel. CE & Acc. damage (pp) & NMSE & Empty (\%) \\
\midrule
Pythia-70M & Rec. & 192 & 84.2 & 100.0 & 0.032 & +0.021 & 1.13 & 0.167 & 0.0 \\
 & Lin. & 192 & 89.0 & 94.2 & 0.034 & +0.023 & 1.32 & 0.258 & 0.0 \\
 & Down. & 192 & 46.6 & 47.1 & 0.051 & +0.039 & 2.10 & 0.530 & 0.0 \\
 & Transfer & 192 & 43.5 & 43.0 & 0.169 & +0.117 & 5.37 & 0.886 & 12.5 \\
\midrule
Pythia-160M & Rec. & 192 & 75.2 & 99.5 & 0.016 & +0.012 & 0.70 & 0.215 & 0.0 \\
 & Lin. & 192 & 84.7 & 90.1 & 0.015 & +0.013 & 0.75 & 0.264 & 0.0 \\
 & Down. & 192 & 22.9 & 22.9 & 0.040 & +0.034 & 1.63 & 0.807 & 1.6 \\
 & Transfer & 192 & 37.1 & 32.5 & 0.099 & +0.108 & 4.73 & 1.182 & 12.0 \\
\midrule
Pythia-410M & Rec. & 192 & 64.9 & 90.9 & 0.009 & +0.006 & 0.32 & 0.299 & 0.0 \\
 & Lin. & 192 & 83.4 & 89.7 & 0.007 & +0.005 & 0.20 & 0.270 & 0.0 \\
 & Down. & 192 & 10.1 & 10.1 & 0.020 & +0.020 & 0.80 & 0.974 & 12.5 \\
 & Transfer & 192 & 28.6 & 21.1 & 0.032 & +0.035 & 0.90 & 0.953 & 15.6 \\
\midrule
OLMo-1B & Rec. & 192 & 20.8 & 12.9 & 0.028 & +0.025 & 0.77 & 0.573 & 0.0 \\
 & Lin. & 192 & 50.4 & 51.8 & 0.016 & +0.015 & 0.47 & 0.514 & 0.0 \\
 & Down. & 192 & 0.6 & 0.6 & 0.033 & +0.037 & 1.20 & 0.995 & 21.4 \\
 & Transfer & 192 & 4.2 & 2.1 & 0.093 & +0.083 & 2.68 & 1.006 & 10.4 \\
\midrule
EuroBERT-210M & Rec. & 192 & 73.1 & 85.2 & 0.001 & +0.001 & 0.03 & 0.144 & 0.0 \\
 & Lin. & 192 & 25.9 & 26.0 & 0.006 & +0.009 & 0.38 & 0.691 & 0.0 \\
 & Down. & 192 & 6.6 & 6.4 & 0.005 & +0.006 & 0.33 & 0.861 & 13.0 \\
 & Transfer & 192 & 33.7 & 32.2 & 0.006 & +0.011 & 0.38 & 0.948 & 0.0 \\
\midrule
EuroBERT-610M & Rec. & 192 & 57.5 & 63.2 & 0.006 & +0.003 & 0.37 & 0.277 & 0.0 \\
 & Lin. & 192 & 29.3 & 29.6 & 0.014 & +0.010 & 0.75 & 0.638 & 0.0 \\
 & Down. & 192 & 2.7 & 2.6 & 0.015 & +0.011 & 0.77 & 0.959 & 10.4 \\
 & Transfer & 192 & 11.6 & 10.5 & 0.020 & +0.017 & 1.07 & 1.009 & 4.7 \\
\midrule
ModernCamemBERT & Rec. & 192 & 73.0 & 90.4 & 0.002 & +0.001 & 0.17 & 0.148 & 0.0 \\
 & Lin. & 192 & 84.5 & 96.0 & 0.001 & +0.001 & 0.10 & 0.198 & 7.3 \\
 & Down. & 192 & 0.8 & 0.5 & 0.007 & +0.007 & 0.23 & 1.000 & 31.2 \\
 & Transfer & 192 & 23.7 & 19.4 & 0.112 & +0.124 & 3.72 & 1.785 & 12.0 \\
\bottomrule
\end{tabular}%
}
\vspace{2pt}
\begin{minipage}{0.98\linewidth}
\footnotesize\textit{Notes.} Values are within-model medians over four methods, eight scheduled checkpoints, and six tracked layers. Accuracy damage is positive when the intervention reduces target-token top-1 accuracy. Transfer is source-to-final and should not be compared as if it were a same-checkpoint objective. The 192 cells per row are structured repeated measurements, not IID replicates.
\end{minipage}
\end{table*}

Table~\ref{tab:pairwise-objectives} gives the exhaustive same-checkpoint comparison after exact matching. The dominance columns require both lower $\Delta\mathrm{KL}$ and no larger soft occupancy; $93.2\%$ of reconstruction--downstream pairs exhibit the expected sparsity--fidelity trade-off. Table~\ref{tab:transfer-summary} reports transfer separately and repeats the endpoint analysis after excluding learned self-alignment.

\begin{table*}[t]
\centering
\scriptsize
\setlength{\tabcolsep}{3.4pt}
\caption{Exact same-checkpoint contrasts among reconstruction, linear, and downstream monitoring objectives.}
\label{tab:pairwise-objectives}
\resizebox{\textwidth}{!}{%
\begin{tabular}{@{}lrrrrrrrr@{}}
\toprule
$A-B$ & Pairs & $\log_{10}(\Delta\mathrm{KL}_A/\Delta\mathrm{KL}_B)$ & $\Delta K_{\mathrm{soft}}/K_{\max}$ & $\Delta$ rel. CE & $\Delta$ acc. damage (pp) & $A$ dom. (\%) & $B$ dom. (\%) & Trade-off (\%) \\
\midrule
Rec. $-$ Lin. & 1,344 & $-0.000$ [$-0.158$, $+0.072$] & $-0.081$ [$-0.125$, $+0.136$] & $+0.000$ [$-0.003$, $+0.000$] & $-0.033$ [$-0.200$, $+0.033$] & 6.8 & 2.7 & 90.2 \\
Rec. $-$ Down. & 1,344 & $-0.404$ [$-0.525$, $-0.324$] & $+0.444$ [$+0.368$, $+0.474$] & $-0.007$ [$-0.013$, $-0.005$] & $-0.350$ [$-0.592$, $-0.283$] & 2.2 & 4.5 & 93.2 \\
Lin. $-$ Down. & 1,344 & $-0.354$ [$-0.432$, $-0.175$] & $+0.486$ [$+0.251$, $+0.599$] & $-0.013$ [$-0.014$, $-0.003$] & $-0.550$ [$-0.584$, $-0.150$] & 0.4 & 6.6 & 93.0 \\
\bottomrule
\end{tabular}%
}
\vspace{2pt}
\begin{minipage}{0.98\linewidth}
\footnotesize\textit{Notes.} Every pair is matched on model, method, checkpoint, and layer. A contrast is $A-B$: negative log-KL and accuracy-damage contrasts favor $A$, whereas negative occupancy makes $A$ sparser. Continuous effects are medians [model IQR] of model-level medians; rates are macro-averaged model rates. ``Trade-off'' combines cases in which the sparser mask has larger KL and cases in which the larger mask has smaller KL; the small remainder consists of boundary/tie cases (percentages may also differ by rounding). Transfer is excluded because it is not a same-checkpoint objective.
\end{minipage}
\end{table*}

\begin{table*}[t]
\centering
\begin{minipage}{0.96\textwidth}
\centering
\scriptsize
\setlength{\tabcolsep}{3.6pt}
\captionof{table}{Source-to-final transfer operating point and no-self endpoint change.}
\label{tab:transfer-summary}
\begin{tabular}{@{}lrrrrrrrr@{}}
\toprule
Scope & Models & Rel. KL & Rel. CE & Acc. damage (pp) & $K_s/K_{\max}$ (\%) & $K_h/K_{\max}$ (\%) & NMSE & No-self $\Delta\log_{10}\mathrm{KL}$ \\
\midrule
All eight sources & 7 & 0.093 & +0.083 & 2.68 & 28.6 & 21.1 & 1.006 & -- \\
Earliest$\rightarrow$penultimate & 7 & -- & -- & -- & -- & -- & -- & $-0.016$ \\
\bottomrule
\end{tabular}

\vspace{2pt}
\begin{minipage}{0.92\linewidth}
\footnotesize\textit{Notes.} Transfer masks use source coordinates evaluated in the final model. The final source is learned self-alignment, not a reconstruction baseline. The last column is the median model-level earliest-to-penultimate change, excluding that endpoint.
\end{minipage}
\end{minipage}

\vspace{8pt}

\begin{minipage}{0.96\textwidth}
\centering
\small
\setlength{\tabcolsep}{4pt}
\captionof{table}{Paired objective comparisons stratified by language-modeling objective.}
\label{tab:architecture_sensitivity}
\begin{tabular}{@{}llrrrr@{}}
\toprule
Group & Comparison & $\Delta\Delta$KL & Lower KL & $\Delta K_s/K_{\max}$ & Comparison dom. \\
\midrule
Causal LMs & Linear $-$ rec. & $-.00075$ & 59.0\% & $+.121$ & 4.0\% \\
Causal LMs & Downstream $-$ rec. & $+.0443$ & 11.7\% & $-.368$ & 6.1\% \\
Masked LMs & Linear $-$ rec. & $+.0225$ & 22.6\% & $-.269$ & .9\% \\
Masked LMs & Downstream $-$ rec. & $+.0247$ & 2.4\% & $-.483$ & 2.4\% \\
\bottomrule
\end{tabular}

\vspace{2pt}
\begin{minipage}{0.82\linewidth}
\footnotesize\textit{Notes.} $K_s=K_{\mathrm{soft}}$. Rates are macro-averaged over models and remain descriptive because there is one run per model and the groups differ in more than attention direction.
\end{minipage}
\end{minipage}
\end{table*}

Figure~\ref{fig:criterion_tradeoffs} shows that the compact downstream operating point holds in every model, with a corresponding increase in $\Delta\mathrm{KL}$ under the shared penalty; fixed-head selection is more heterogeneous. The visually separated transfer panel provides the source-to-final context and has a different estimand from the same-checkpoint contrasts.

\begin{figure*}[t]
\centering
\includegraphics[width=.91\textwidth]{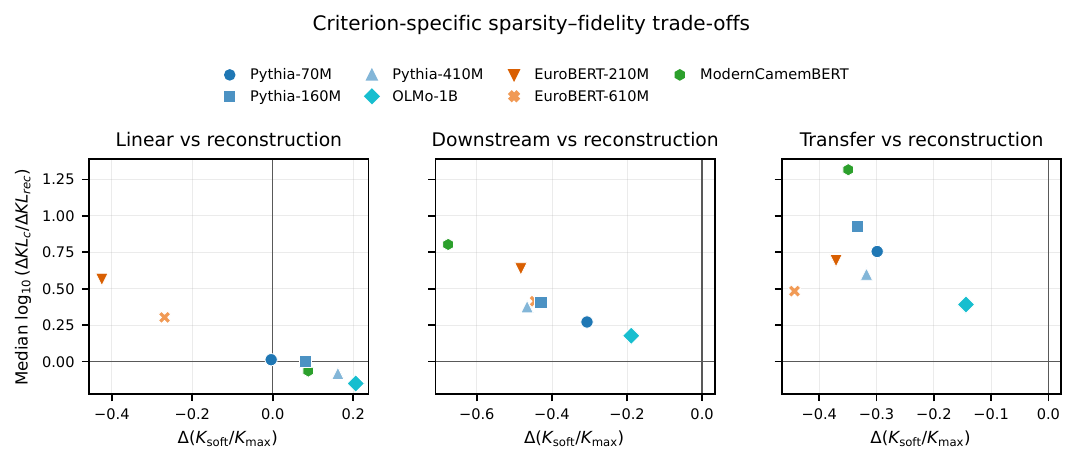}
\caption{\textbf{Target-specific sparsity--fidelity contrasts.} Same-checkpoint panels pair each model with reconstruction at the same method, checkpoint, and layer. The horizontal coordinate is the change in normalized soft cardinality; the vertical coordinate is the median log-ratio of held-out distribution shift. The separated transfer panel pairs source coordinates evaluated in the final model with reconstruction at the source and is contextual rather than an equal-estimand target comparison.}
\label{fig:criterion_tradeoffs}
\end{figure*}

Table~\ref{tab:architecture_sensitivity} further stratifies the paired comparison by language-modeling objective. Fixed-head selection has a lower-KL rate of $59.0\%$ in the four autoregressive models and $22.6\%$ in the three masked models; the corresponding downstream rates are $11.7\%$ and $2.4\%$. This stable group contrast motivates retaining model identity, although the two groups also differ in data, tokenizer, and training recipe.

\FloatBarrier

\subsection{Checkpoint, depth, and scale}

Figure~\ref{fig:training_dynamics} shows normalized checkpoint trajectories before reduction to correlations. For reconstruction, fixed-head, and downstream selection, $\Delta\mathrm{KL}$ ends above its initial value in all three Pythia models, OLMo, and ModernCamemBERT, and below it in EuroBERT-210M. These coherent within-model profiles supply the intuition behind Figure~\ref{fig:architecture_fingerprints}; EuroBERT-610M and aggregate transfer show more mixed paths, with transfer resolving cleanly by method below.

\begin{figure*}[t]
\centering
\includegraphics[width=.93\textwidth]{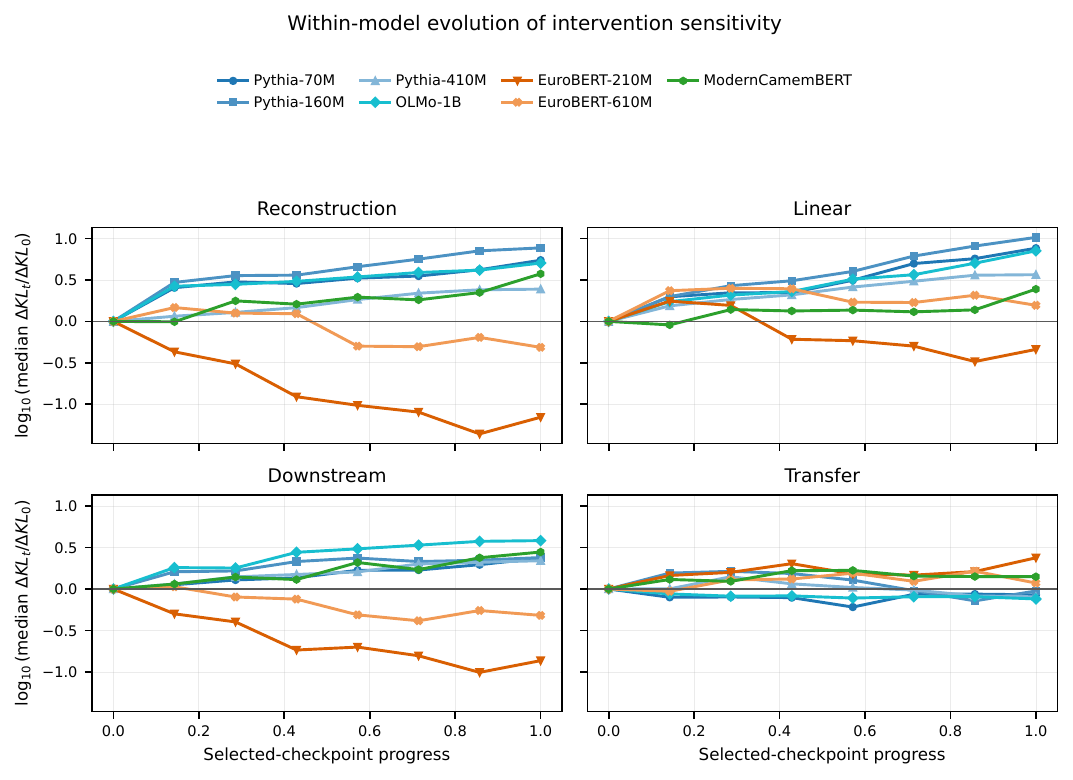}
\caption{\textbf{Within-model evolution of selected-mask intervention effects.} At each checkpoint, $\Delta\mathrm{KL}$ is summarized over decomposition methods and tracked layers and divided by the initial-checkpoint median. The horizontal axis is scheduled-checkpoint rank; connected points are not independent replicates.}
\label{fig:training_dynamics}
\end{figure*}

The layer-resolved view in Figure~\ref{fig:layerwise_temporal} shows positive checkpoint-rank associations at every sampled layer of each Pythia model and ModernCamemBERT for the three same-checkpoint objectives, so the temporal signature is network-wide rather than confined to a late block. EuroBERT-210M supplies the coherent opposite profile, with negative associations at every sampled layer for reconstruction and downstream selection; EuroBERT-610M weakens this pattern at its final sampled layer. Transfer contains opposing early- and late-layer tendencies and ends in learned self-alignment.

\begin{figure*}[t]
\centering
\includegraphics[width=.98\textwidth]{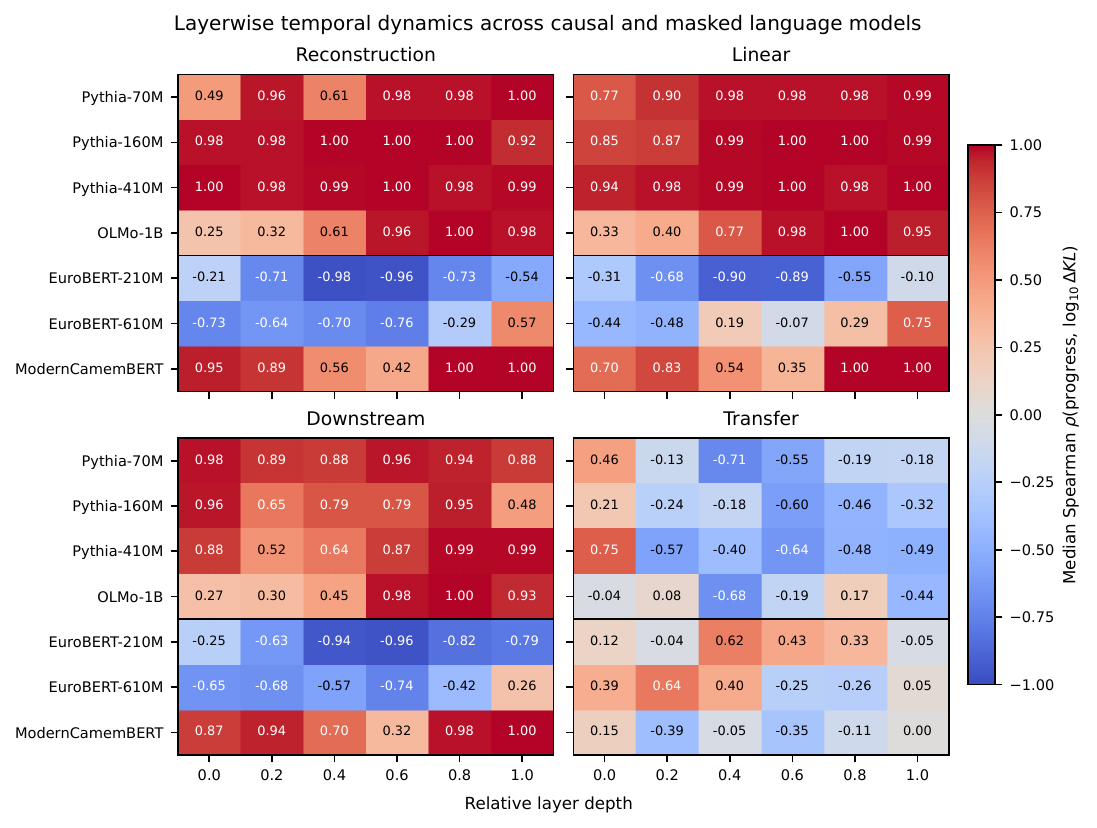}
\caption{\textbf{Layer-resolved checkpoint dynamics.} Each cell is the median across decomposition methods of the within-layer Spearman correlation between the eight common-grid checkpoint ranks and $\log_{10}\Delta\mathrm{KL}$. Positive values indicate a larger selected-mask effect at later scheduled checkpoints; they do not imply a monotone path. Transfer includes learned self-alignment as its final point and cannot be read as a continuity trend.}
\label{fig:layerwise_temporal}
\end{figure*}

The Pythia size comparison gives a strong matched-cell pattern (Figure~\ref{fig:architecture_scale}): Pythia-410M has lower $\Delta\mathrm{KL}$ than Pythia-70M in $92.8\%$ of cells, a smaller normalized mask in $78.8\%$, and both properties in $72.4\%$. EuroBERT-610M instead has a smaller mask in $65.4\%$ of matched cells but lower $\Delta\mathrm{KL}$ in $20.7\%$, showing that the relationship between size, sparsity, and fidelity is family-dependent rather than a universal scaling law.

\begin{figure*}[t]
\centering
\includegraphics[width=.88\textwidth]{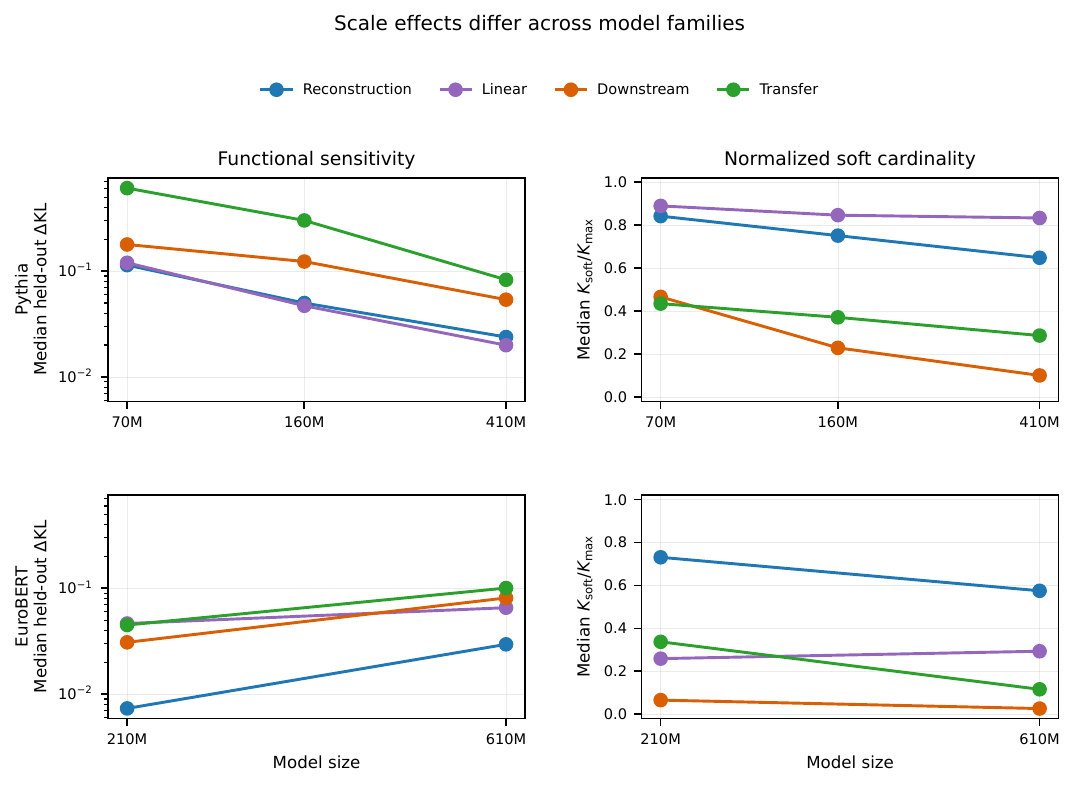}
\caption{\textbf{Within-family size contrasts.} Points are model-level medians over the matched method, target, checkpoint-rank, and depth-rank grid. $K_{\max}$ is clamp-determined and the sparsity penalty is not width-normalized.}
\label{fig:architecture_scale}
\end{figure*}

Figure~\ref{fig:depth_relocation} asks whether the location and breadth of a behavioral effect change between the first and final scheduled checkpoints. For each checkpoint, the nonnegative effect magnitudes across the six sampled layers are normalized to sum to one. The center is their mean sampled-depth rank, and support is their entropy effective count divided by six. The $\Delta\mathrm{KL}$ center moves later for reconstruction in six of seven models and for fixed-head selection in all seven; downstream relocation and support remain model-dependent (Table~\ref{tab:temporal-depth-ledger}). Because normalization removes total magnitude, center and support describe relative allocation across the sampled layers.

\begin{figure*}[t]
\centering
\includegraphics[width=.98\textwidth]{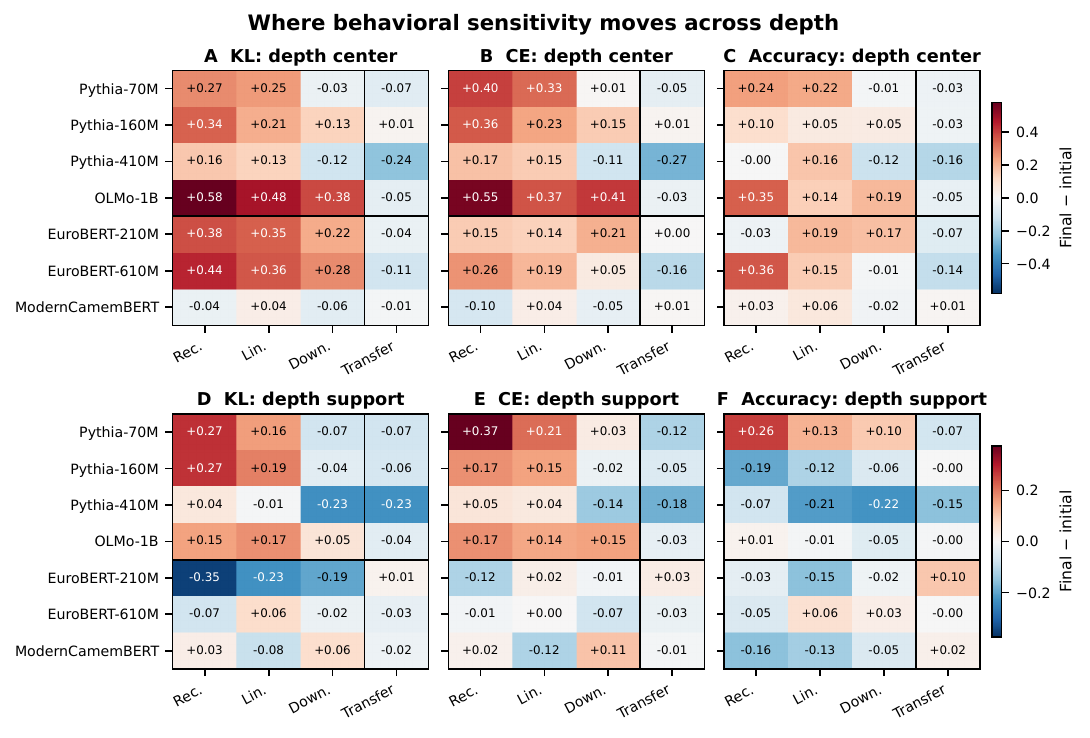}
\caption{\textbf{Endpoint relocation across sampled depth ranks.} A--C report final-minus-initial changes in the effect-weighted center for $\Delta\mathrm{KL}$, absolute relative $\Delta\mathrm{CE}$, and absolute $\Delta\mathrm{Acc}$. D--F report changes in normalized entropy support. Positive center values indicate movement toward later among the six sampled MLP outputs; positive support values indicate a more broadly distributed effect. Cells are medians over four decompositions. Transfer includes learned self-alignment and cannot be interpreted as a continuity trend.}
\label{fig:depth_relocation}
\end{figure*}

Endpoint displacement does not determine the path between endpoints. For a sequence $x_1,\ldots,x_8$, Figure~\ref{fig:path_shape} reports $|x_8-x_1|/\sum_{j=1}^{7}|x_{j+1}-x_j|$, with undefined flat paths omitted. A value near one indicates movement largely in one direction on the scheduled grid; a value near zero indicates reversals or a small net change. Across the 21 same-checkpoint model--target summaries, its median is $0.72$ for $\log_{10}\Delta\mathrm{KL}$, $0.56$ for soft occupancy, and $0.46$ for $\log_{10}$ NMSE. This is a path-directness diagnostic over eight checkpoint ranks, not a claim about smoothness between checkpoints. Table~\ref{tab:temporal-depth-ledger} gives the numerical summaries; transfer is omitted because its self-alignment endpoint changes the comparison.

\begin{figure*}[t]
\centering
\includegraphics[width=.98\textwidth]{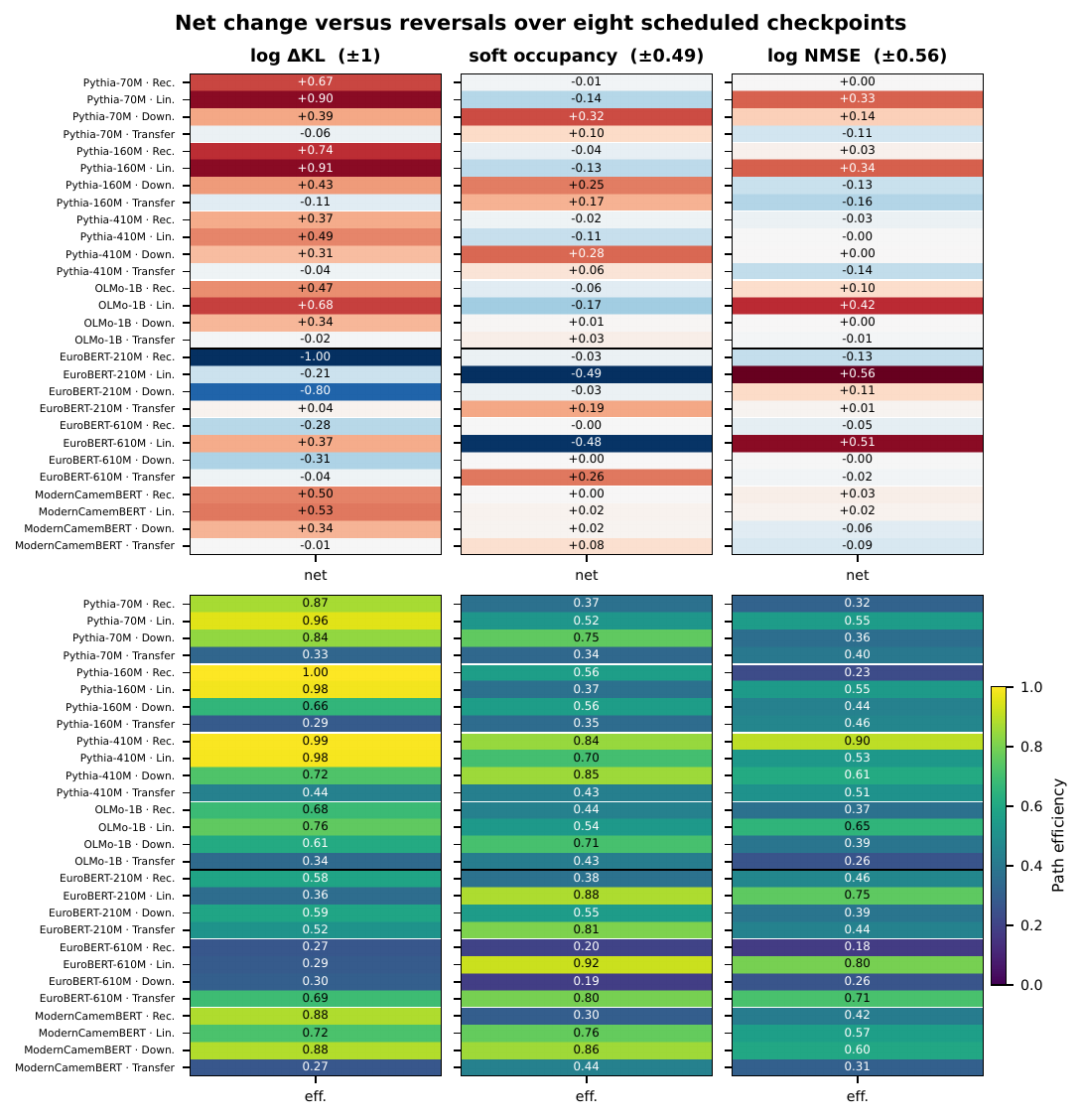}
\caption{\textbf{Net change and directness on the scheduled checkpoint grid.} Columns show $\log_{10}\Delta\mathrm{KL}$, normalized soft occupancy, and $\log_{10}$ NMSE. The upper row reports endpoint change and the lower row reports the ratio of endpoint displacement to total adjacent variation, summarized within model and target over method--layer strata. Transfer includes its learned self-alignment endpoint and should not be compared directly with the three same-checkpoint targets.}
\label{fig:path_shape}
\end{figure*}

\begin{table*}[t]
\centering
\scriptsize
\setlength{\tabcolsep}{3.2pt}
\caption{Same-checkpoint temporal paths and relocation of selected-mask effects across the tracked layers.}
\label{tab:temporal-depth-ledger}
\resizebox{\textwidth}{!}{%
\begin{tabular}{@{}llrrrrrrrr@{}}
\toprule
Model & Objective & $\Delta\log_{10}\mathrm{KL}$ & Eff. KL & $\Delta K_s/K_{\max}$ & Eff. $K_s$ & $\Delta\log_{10}\mathrm{NMSE}$ & Eff. NMSE & $\Delta$ KL center & $\Delta$ KL support \\
\midrule
Pythia-70M & Rec. & +0.67 & 0.87 & $-0.01$ & 0.37 & +0.00 & 0.32 & +0.27 & +0.27 \\
 & Lin. & +0.90 & 0.96 & $-0.14$ & 0.52 & +0.33 & 0.55 & +0.25 & +0.16 \\
 & Down. & +0.39 & 0.84 & +0.32 & 0.75 & +0.14 & 0.36 & $-0.03$ & $-0.07$ \\
\midrule
Pythia-160M & Rec. & +0.74 & 1.00 & $-0.04$ & 0.56 & +0.03 & 0.23 & +0.34 & +0.27 \\
 & Lin. & +0.91 & 0.98 & $-0.13$ & 0.37 & +0.34 & 0.55 & +0.21 & +0.19 \\
 & Down. & +0.43 & 0.66 & +0.25 & 0.56 & $-0.13$ & 0.44 & +0.13 & $-0.04$ \\
\midrule
Pythia-410M & Rec. & +0.37 & 0.99 & $-0.02$ & 0.84 & $-0.03$ & 0.90 & +0.16 & +0.04 \\
 & Lin. & +0.49 & 0.98 & $-0.11$ & 0.70 & $-0.00$ & 0.53 & +0.13 & $-0.01$ \\
 & Down. & +0.31 & 0.72 & +0.28 & 0.85 & +0.00 & 0.61 & $-0.12$ & $-0.23$ \\
\midrule
OLMo-1B & Rec. & +0.47 & 0.68 & $-0.06$ & 0.44 & +0.10 & 0.37 & +0.58 & +0.15 \\
 & Lin. & +0.68 & 0.76 & $-0.17$ & 0.54 & +0.42 & 0.65 & +0.48 & +0.17 \\
 & Down. & +0.34 & 0.61 & +0.01 & 0.71 & +0.00 & 0.39 & +0.38 & +0.05 \\
\midrule
EuroBERT-210M & Rec. & $-1.00$ & 0.58 & $-0.03$ & 0.38 & $-0.13$ & 0.46 & +0.38 & $-0.35$ \\
 & Lin. & $-0.21$ & 0.36 & $-0.49$ & 0.88 & +0.56 & 0.75 & +0.35 & $-0.23$ \\
 & Down. & $-0.80$ & 0.59 & $-0.03$ & 0.55 & +0.11 & 0.39 & +0.22 & $-0.19$ \\
\midrule
EuroBERT-610M & Rec. & $-0.28$ & 0.27 & $-0.00$ & 0.20 & $-0.05$ & 0.18 & +0.44 & $-0.07$ \\
 & Lin. & +0.37 & 0.29 & $-0.48$ & 0.92 & +0.51 & 0.80 & +0.36 & +0.06 \\
 & Down. & $-0.31$ & 0.30 & +0.00 & 0.19 & $-0.00$ & 0.26 & +0.28 & $-0.02$ \\
\midrule
ModernCamemBERT & Rec. & +0.50 & 0.88 & +0.00 & 0.30 & +0.03 & 0.42 & $-0.04$ & +0.03 \\
 & Lin. & +0.53 & 0.72 & +0.02 & 0.76 & +0.02 & 0.57 & +0.04 & $-0.08$ \\
 & Down. & +0.34 & 0.88 & +0.02 & 0.86 & $-0.06$ & 0.60 & $-0.06$ & +0.06 \\
\bottomrule
\end{tabular}%
}
\vspace{2pt}
\begin{minipage}{0.98\linewidth}
\footnotesize\textit{Notes.} Transfer rows are excluded because their final source is a self-alignment endpoint. For a checkpoint sequence $x_1,\ldots,x_8$, path efficiency is $|x_8-x_1|/\sum_{j=1}^{7}|x_{j+1}-x_j|$; values near one indicate a nearly monotone path, while low values indicate reversals or a small net endpoint change. Entries are medians over method$\times$layer strata. The eight checkpoints are scheduled ordinal positions, not a dense or equally spaced time series. The depth center uses coordinates $j/5$ for the ordered six tracked layers, and thus measures sampled-layer rank rather than the exact normalized layer index; support is the entropy effective support of the six nonnegative KL shares, divided by six.
\end{minipage}
\end{table*}

\FloatBarrier

\subsection{Decomposition and checkpoint transfer}

\begin{table*}[t]
\centering
\scriptsize
\setlength{\tabcolsep}{2.8pt}
\caption{Model- and decomposition-resolved intervention behavior for all four monitoring objectives.}
\label{tab:method-behavior}
\resizebox{\textwidth}{!}{%
\begin{tabular}{@{}llrrrrrrrr@{}}
\toprule
Model & Method & \multicolumn{2}{c}{Rec.} & \multicolumn{2}{c}{Lin.} & \multicolumn{2}{c}{Down.} & \multicolumn{2}{c}{Transfer} \\
\cmidrule(lr){3-4}\cmidrule(lr){5-6}\cmidrule(lr){7-8}\cmidrule(l){9-10}
 & & Rel. KL & $K_s/K_{\max}$ (\%) & Rel. KL & $K_s/K_{\max}$ (\%) & Rel. KL & $K_s/K_{\max}$ (\%) & Rel. KL & $K_s/K_{\max}$ (\%) \\
\midrule
Pythia-70M & SVD & 0.024 & 61.2 & 0.018 & 70.5 & 0.038 & 44.3 & 0.160 & 31.2 \\
 & ICA & 0.018 & 83.3 & 0.016 & 91.7 & 0.036 & 58.3 & 0.106 & 58.4 \\
 & NMF & 0.077 & 100.0 & 0.091 & 95.7 & 0.115 & 36.1 & 0.948 & 0.2 \\
 & SemiNMF & 0.033 & 92.0 & 0.047 & 84.8 & 0.062 & 52.4 & 0.127 & 62.5 \\
\midrule
Pythia-160M & SVD & 0.013 & 50.1 & 0.010 & 68.2 & 0.031 & 17.3 & 0.095 & 21.3 \\
 & ICA & 0.009 & 72.4 & 0.009 & 86.6 & 0.030 & 30.4 & 0.102 & 49.6 \\
 & NMF & 0.032 & 99.7 & 0.041 & 94.5 & 0.071 & 11.6 & 1.143 & 6.4 \\
 & SemiNMF & 0.016 & 78.7 & 0.025 & 88.5 & 0.041 & 30.6 & 0.053 & 44.7 \\
\midrule
Pythia-410M & SVD & 0.006 & 47.5 & 0.005 & 62.7 & 0.016 & 8.7 & 0.045 & 20.8 \\
 & ICA & 0.005 & 65.3 & 0.004 & 82.4 & 0.017 & 16.9 & 0.027 & 46.6 \\
 & NMF & 0.012 & 100.0 & 0.013 & 91.4 & 0.031 & 5.3 & 0.035 & 0.0 \\
 & SemiNMF & 0.009 & 64.4 & 0.008 & 88.0 & 0.021 & 14.6 & 0.026 & 34.2 \\
\midrule
OLMo-1B & SVD & 0.014 & 14.0 & 0.009 & 29.5 & 0.021 & 0.5 & 0.070 & 2.9 \\
 & ICA & 0.017 & 24.7 & 0.010 & 51.6 & 0.026 & 0.2 & 0.048 & 1.8 \\
 & NMF & 0.145 & 86.2 & 0.189 & 54.1 & 0.060 & 0.9 & 0.382 & 10.0 \\
 & SemiNMF & 0.023 & 15.6 & 0.013 & 65.1 & 0.036 & 0.3 & 0.063 & 5.5 \\
\midrule
EuroBERT-210M & SVD & 0.001 & 42.0 & 0.003 & 4.3 & 0.005 & 1.4 & 0.004 & 15.8 \\
 & ICA & 0.001 & 73.2 & 0.003 & 15.5 & 0.006 & 0.2 & 0.002 & 26.5 \\
 & NMF & 0.001 & 100.0 & 0.117 & 48.4 & 0.002 & 96.3 & 0.418 & 44.9 \\
 & SemiNMF & 0.002 & 73.4 & 0.005 & 29.3 & 0.007 & 8.2 & 0.004 & 39.1 \\
\midrule
EuroBERT-610M & SVD & 0.005 & 30.3 & 0.009 & 5.6 & 0.011 & 1.2 & 0.015 & 6.3 \\
 & ICA & 0.004 & 58.8 & 0.010 & 22.7 & 0.014 & 0.3 & 0.008 & 8.6 \\
 & NMF & 0.006 & 99.1 & 0.096 & 51.5 & 0.018 & 78.4 & 0.702 & 41.5 \\
 & SemiNMF & 0.006 & 57.0 & 0.012 & 36.2 & 0.017 & 3.5 & 0.017 & 9.0 \\
\midrule
ModernCamemBERT & SVD & 0.001 & 44.5 & 0.000 & 41.4 & 0.005 & 0.3 & 0.069 & 9.2 \\
 & ICA & 0.001 & 72.7 & 0.000 & 92.7 & 0.006 & 0.0 & 0.041 & 29.7 \\
 & NMF & 0.004 & 100.0 & 0.004 & 98.0 & 0.009 & 1.5 & 0.460 & 34.2 \\
 & SemiNMF & 0.001 & 76.9 & 0.001 & 60.8 & 0.015 & 0.7 & 0.060 & 31.8 \\
\bottomrule
\end{tabular}%
}
\vspace{2pt}
\begin{minipage}{0.98\linewidth}
\footnotesize\textit{Notes.} Each entry is the within-model/method median over the eight-checkpoint common grid and six tracked layers. $K_s=K_{\mathrm{soft}}$. Transfer is source-to-final. The table exposes decomposition-specific differences hidden by the model-level dashboard; rows are not independent replications.
\end{minipage}
\end{table*}

The model-resolved stratification in Table~\ref{tab:method-behavior} distinguishes raw fidelity from the paired Pareto rates in Figure~\ref{fig:method_pareto}. ICA often attains low raw $\Delta\mathrm{KL}$ with larger masks, whereas SVD has the highest within-cell Pareto-dominance rate for each of the four objectives. The shared candidate budget makes this a direct comparison of the executed coordinate systems, including their factorization constraints, preprocessing, and optimization behavior.

\begin{figure*}[t]
\centering
\includegraphics[width=.56\textwidth]{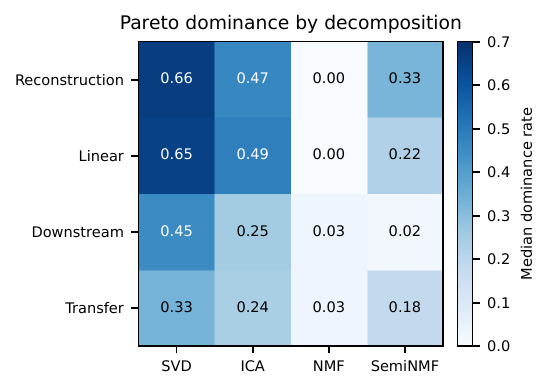}
\caption{\textbf{Pareto dominance by decomposition.} A method dominates a competitor in a matched cell when it has lower held-out $\Delta\mathrm{KL}$ and no larger $K_{\mathrm{soft}}/K_{\max}$. Entries are medians across the seven model-level dominance rates.}
\label{fig:method_pareto}
\end{figure*}

Figure~\ref{fig:transfer_methods} reports the decomposition-specific endpoint changes directly. From the earliest source to learned self-alignment, held-out transfer $\Delta\mathrm{KL}$ decreases for SVD and ICA in all seven models, compared with two models for NMF and one for SemiNMF. This repeated split explains the nearly stable aggregate transfer curve and shows that the monitor distinguishes how alignment interacts with each coordinate family; the endpoint comparison does not require a monotone intervening path.

\begin{figure*}[t]
\centering
\includegraphics[width=.71\textwidth]{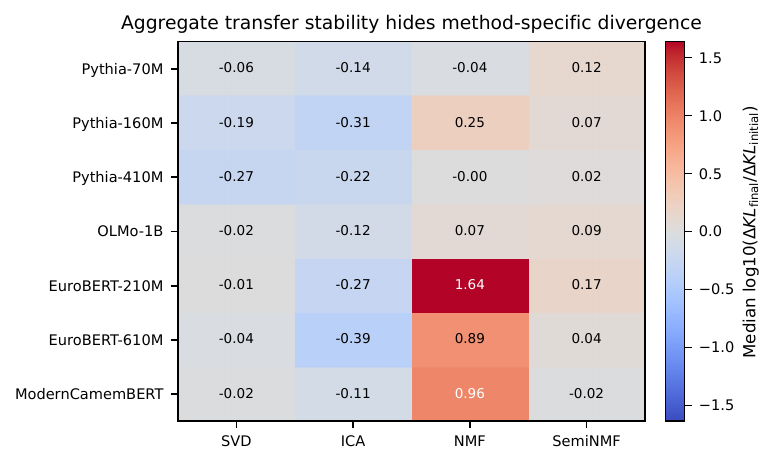}
\caption{\textbf{Transfer endpoint change by decomposition.} Cells give the median paired $\log_{10}(\Delta\mathrm{KL}_{\rm self}/\Delta\mathrm{KL}_{\rm earliest\mbox{-}source})$ across tracked layers. The numerator is a learned self-alignment endpoint; negative values therefore indicate lower held-out transfer KL there, not monotonic improvement in cross-checkpoint transfer.}
\label{fig:transfer_methods}
\end{figure*}

\FloatBarrier

\subsection{Geometry and supervised gate gradients}

Figure~\ref{fig:selection_enrichment} disaggregates the selection comparison by model. The denominator $K_{\mathrm{hard}}/K_{\max}$ is the analytic expected mass share under uniform same-cardinality selection, so $1\times$ is the size-only baseline; no random subsets are sampled. Empty hard masks are undefined for this statistic and excluded.

\begin{figure*}[t]
\centering
\includegraphics[width=.91\textwidth]{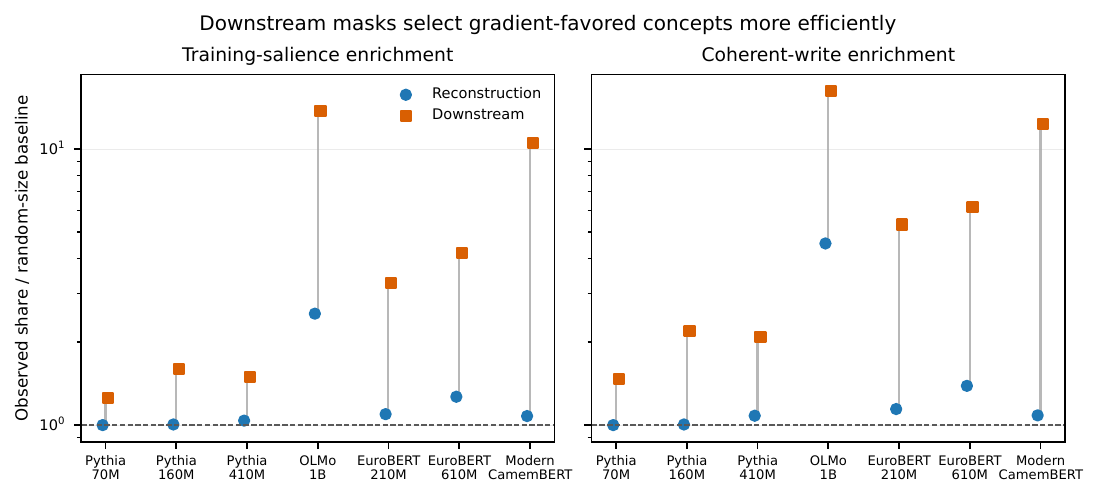}
\caption{\textbf{Gradient-score enrichment of nonempty hard masks.} Enrichment is the selected share of $\tau$ or $\omega$ divided by $K_{\mathrm{hard}}/K_{\max}$. Points are within-model medians, and segments pair reconstruction and downstream selection. Here $\omega$ is the squared mean signed gate gradient, not a realized parameter update.}
\label{fig:selection_enrichment}
\end{figure*}

Table~\ref{tab:selected-set-ledger} reports the corresponding counts and aggregates for every target. Energy and $\omega$ overlap compare each nonempty hard selection with the equal-size top-ranked reference set; mass and enrichment use the saved additive scores. Only genuine thresholded selections enter these summaries; empty sets and their operational argmax fallback are excluded. Because concept identities were not retained, these statistics measure within-cell composition rather than cross-checkpoint persistence.

\begin{table*}[t]
\centering
\scriptsize
\setlength{\tabcolsep}{2.8pt}
\caption{Saved score aggregates of genuine nonempty hard selections.}
\label{tab:selected-set-ledger}
\resizebox{\textwidth}{!}{%
\begin{tabular}{@{}llrrrrrrrrr@{}}
\toprule
Model & Objective & Valid & Excl. & $K_h/K_{\max}$ (\%) & Energy overlap & $\omega$ overlap & $\tau$ mass & $\omega$ mass & $\tau$ enrich. & $\omega$ enrich. \\
\midrule
Pythia-70M & Rec. & 192 & 0 & 100.0 & 1.00 & 1.00 & 1.00 & 1.00 & 1.00$\times$ & 1.00$\times$ \\
 & Lin. & 192 & 0 & 94.2 & 0.95 & 0.94 & 0.99 & 1.00 & 1.03$\times$ & 1.04$\times$ \\
 & Down. & 192 & 0 & 47.1 & 0.65 & 0.58 & 0.74 & 0.93 & 1.26$\times$ & 1.47$\times$ \\
 & Transfer & 168 & 24 & 51.2 & 0.72 & 0.57 & 0.53 & 0.51 & 1.05$\times$ & 1.05$\times$ \\
\midrule
Pythia-160M & Rec. & 192 & 0 & 99.5 & 1.00 & 1.00 & 1.00 & 1.00 & 1.01$\times$ & 1.01$\times$ \\
 & Lin. & 192 & 0 & 90.1 & 0.94 & 0.91 & 0.98 & 1.00 & 1.04$\times$ & 1.06$\times$ \\
 & Down. & 189 & 3 & 22.9 & 0.46 & 0.42 & 0.44 & 0.75 & 1.60$\times$ & 2.19$\times$ \\
 & Transfer & 169 & 23 & 39.2 & 0.59 & 0.47 & 0.41 & 0.46 & 1.18$\times$ & 1.22$\times$ \\
\midrule
Pythia-410M & Rec. & 192 & 0 & 90.9 & 0.98 & 0.92 & 0.95 & 1.00 & 1.04$\times$ & 1.08$\times$ \\
 & Lin. & 192 & 0 & 89.7 & 0.92 & 0.90 & 0.97 & 1.00 & 1.06$\times$ & 1.09$\times$ \\
 & Down. & 168 & 24 & 14.1 & 0.43 & 0.41 & 0.40 & 0.64 & 1.50$\times$ & 2.08$\times$ \\
 & Transfer & 162 & 30 & 27.2 & 0.62 & 0.45 & 0.36 & 0.35 & 1.28$\times$ & 1.29$\times$ \\
\midrule
OLMo-1B & Rec. & 192 & 0 & 12.9 & 0.95 & 0.50 & 0.67 & 0.95 & 2.53$\times$ & 4.55$\times$ \\
 & Lin. & 192 & 0 & 51.8 & 0.79 & 0.63 & 0.79 & 0.96 & 1.36$\times$ & 1.68$\times$ \\
 & Down. & 151 & 41 & 1.0 & 0.32 & 0.25 & 0.20 & 0.30 & 13.70$\times$ & 16.20$\times$ \\
 & Transfer & 172 & 20 & 2.5 & 0.33 & 0.23 & 0.08 & 0.10 & 2.67$\times$ & 2.86$\times$ \\
\midrule
EuroBERT-210M & Rec. & 192 & 0 & 85.2 & 0.99 & 0.87 & 0.98 & 1.00 & 1.10$\times$ & 1.14$\times$ \\
 & Lin. & 192 & 0 & 26.0 & 0.75 & 0.59 & 0.69 & 0.84 & 2.06$\times$ & 2.90$\times$ \\
 & Down. & 167 & 25 & 7.3 & 0.85 & 0.67 & 0.71 & 0.97 & 3.26$\times$ & 5.32$\times$ \\
 & Transfer & 192 & 0 & 32.2 & 0.70 & 0.56 & 0.46 & 0.59 & 1.48$\times$ & 1.63$\times$ \\
\midrule
EuroBERT-610M & Rec. & 192 & 0 & 63.2 & 0.97 & 0.74 & 0.90 & 0.97 & 1.27$\times$ & 1.39$\times$ \\
 & Lin. & 192 & 0 & 29.6 & 0.79 & 0.57 & 0.71 & 0.86 & 1.87$\times$ & 2.26$\times$ \\
 & Down. & 172 & 20 & 3.2 & 0.71 & 0.54 & 0.50 & 0.82 & 4.19$\times$ & 6.16$\times$ \\
 & Transfer & 183 & 9 & 12.7 & 0.53 & 0.44 & 0.18 & 0.35 & 1.60$\times$ & 1.76$\times$ \\
\midrule
ModernCamemBERT & Rec. & 192 & 0 & 90.4 & 0.99 & 0.96 & 0.99 & 1.00 & 1.08$\times$ & 1.08$\times$ \\
 & Lin. & 178 & 14 & 97.4 & 0.98 & 0.98 & 1.00 & 1.00 & 1.02$\times$ & 1.02$\times$ \\
 & Down. & 132 & 60 & 3.1 & 0.75 & 0.55 & 0.49 & 0.70 & 10.52$\times$ & 12.31$\times$ \\
 & Transfer & 169 & 23 & 23.9 & 0.46 & 0.40 & 0.35 & 0.43 & 1.42$\times$ & 1.50$\times$ \\
\bottomrule
\end{tabular}%
}
\vspace{2pt}
\begin{minipage}{0.98\linewidth}
\footnotesize\textit{Notes.} ``Valid'' counts genuine nonempty hard selections used for set summaries; ``Excl.'' counts cells with $K_h=0$, whose operational argmax fallback is not treated as a selected set. Every non-count entry is an independently computed within-model median over valid method$\times$checkpoint$\times$layer cells, and $K_h/K_{\max}$ is therefore conditional on a nonempty hard set. Energy and $\omega$ overlaps compare the selected set with the corresponding top-$K_h$ ranking. Enrichment is the median of the cellwise ratio $\text{mass}/(K_h/K_{\max})$, not the ratio of displayed medians. Here $\tau$ is the supervised-gradient second moment and $\omega=(\overline{g_i})^2$ is the squared mean-gradient moment. Selected identities were not saved, so the table does not measure identity persistence.
\end{minipage}
\end{table*}

Across all seven models, downstream hard selections are more enriched than reconstruction selections in both supervised gate-gradient moments. Among nonempty cells, their model-level median hard occupancy is only $1.0$--$47.1\%$, yet they retain $20$--$74\%$ of $\tau$ and $30$--$97\%$ of $\omega$. The resulting enrichment spans $1.26$--$13.70\times$ for $\tau$ and $1.47$--$16.20\times$ for $\omega$. This is a consistent cross-model composition result: compact output-preserving masks concentrate far more supervised sensitivity than their cardinality alone predicts.

Figure~\ref{fig:geometry_gradient} separates three associations. Energy and $\tau$ remain strongly rank-correlated across scheduled checkpoints, while the lower $\tau$--$\omega$ association shows that example-level gradient signs can cancel even when squared sensitivity is large. Reconstruction NMSE and held-out $\Delta\mathrm{KL}$ are weakly positively associated for reconstruction and fixed-head selection but negatively associated for downstream selection after within-model--method aggregation, reinforcing that geometric and behavioral preservation order cells differently.

\begin{figure*}[t]
\centering
\includegraphics[width=.98\textwidth]{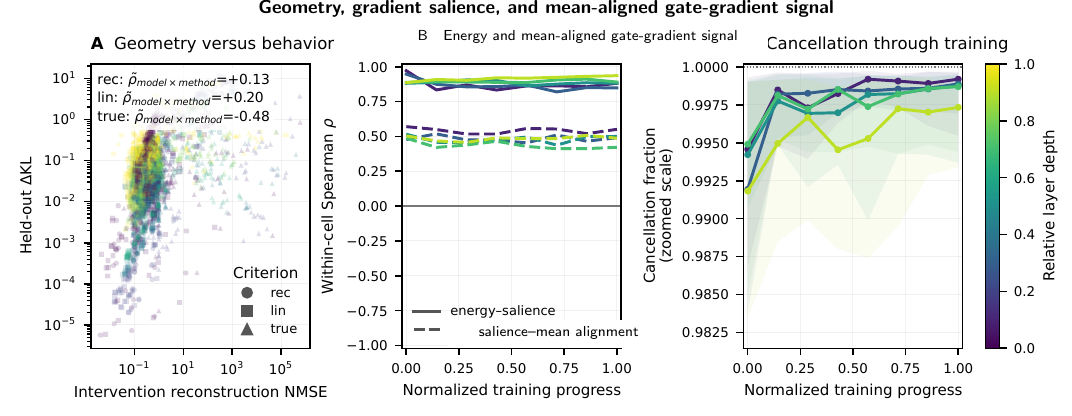}
\caption{\textbf{Geometry, gradient salience, and mean signed gate-gradient signal.} A: reconstruction NMSE against held-out $\Delta\mathrm{KL}$. B: within-cell rank associations of energy with $\tau$ and of $\tau$ with $\omega$. C: $1-\sum_i\omega_i/\sum_i\tau_i$ across scheduled checkpoints. Under the local first-order gate approximation, $\omega$ is the component that persists under example averaging, not a realized training gradient or parameter update. Shading summarizes model-level heterogeneity, not sampling uncertainty.}
\label{fig:geometry_gradient}
\end{figure*}

Table~\ref{tab:geometry-gradient-summary} provides a compact model-level account of energy--$\tau$ association and concentration; Figure~\ref{fig:geometry_gradient} additionally reports $\tau$--$\omega$ association and $\sum_i\omega_i/\sum_i\tau_i$. Proposition~\ref{prop:gate_averaging} interprets the latter as the mean-aligned component on the finite scored-row distribution; it is a local supervised diagnostic rather than a population update direction.

\begin{table}[t]
\centering
\small
\setlength{\tabcolsep}{3.4pt}
\caption{Association between activation energy and supervised-gradient salience.}
\label{tab:geometry-gradient-summary}
\resizebox{\columnwidth}{!}{%
\begin{tabular}{@{}lrrr@{}}
\toprule
Model & $\rho(E,\tau)$ & Top-decile overlap (\%) & Top-10-concept $\tau$ mass (\%) \\
\midrule
Pythia-70M & +0.87 & 76.9 & 41.6 \\
Pythia-160M & +0.89 & 74.4 & 32.4 \\
Pythia-410M & +0.85 & 69.2 & 30.2 \\
OLMo-1B & +0.90 & 73.3 & 41.2 \\
EuroBERT-210M & +0.92 & 82.1 & 25.7 \\
EuroBERT-610M & +0.87 & 75.9 & 28.0 \\
ModernCamemBERT & +0.96 & 85.9 & 42.5 \\
\bottomrule
\end{tabular}%
}
\vspace{2pt}
\begin{minipage}{0.98\linewidth}
\footnotesize\textit{Notes.} Entries are medians over method$\times$checkpoint$\times$layer concept cells. $E$ is activation energy and $\tau$ is the saved supervised-gradient second moment; $\tau$ is not identified with Fisher salience. Top-decile overlap is the intersection of the two top-decile sets divided by their common set size; the final column is the mass in the ten concepts with largest $\tau$. These descriptive associations do not track concept identities across checkpoints and do not imply a coherent parameter-update mechanism.
\end{minipage}
\end{table}

The enrichment result establishes concentration beyond the uniform size expectation. Figure~\ref{fig:same_budget_mass} adds the stricter equal-cardinality energy-head and exact additive-oracle controls. Across retained SVD downstream cells, median capture is $0.991$ of the exact top-$k$ $\tau$ oracle and $0.992$ of the $\omega$ oracle, although the energy-ranked prefix is usually at least as strong. Table~\ref{tab:same-budget-oracle} extends the comparison across all decompositions and shows substantial but model-dependent oracle capture.

\begin{figure*}[t]
\centering
\includegraphics[width=.96\textwidth]{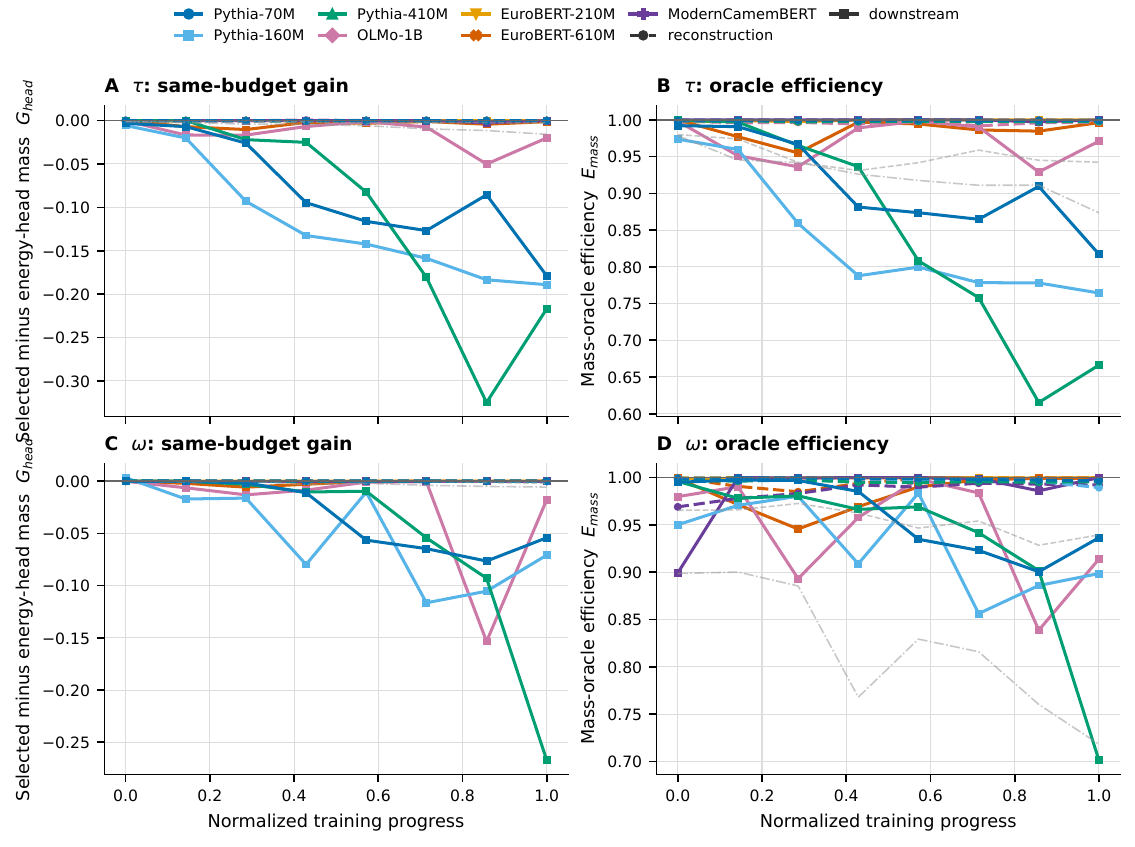}
\caption{\textbf{Same-budget gate-score control.} Colored curves are SVD layer-median trajectories for both reconstruction (dashed circles) and downstream (solid squares) selection in each model; faint gray curves are non-SVD method-median robustness summaries. Left panels show selected-minus-energy-head mass and right panels show efficiency relative to the exact additive top-$k$ score oracle, for $\tau$ (top) and $\omega$ (bottom). The progress axis is scheduled-checkpoint rank; empty selections and zero-total masses are excluded.}
\label{fig:same_budget_mass}
\end{figure*}

\begin{table*}[t]
\centering
\scriptsize
\setlength{\tabcolsep}{3.8pt}
\caption{Downstream selections versus equal-cardinality activation-energy heads and exact additive score oracles.}
\label{tab:same-budget-oracle}
\resizebox{\textwidth}{!}{%
\begin{tabular}{@{}lrrrrrrr@{}}
\toprule
Model & Valid & $G_{\rm head}(\tau)$ & $G_{\rm head}(\omega)$ & $E_{\rm mass}(\tau)$ & $E_{\rm mass}(\omega)$ & $G_{\rm head}(\tau)>0$ (\%) & $G_{\rm head}(\omega)>0$ (\%) \\
\midrule
Pythia-70M & 192 & $-0.051$ & $-0.026$ & 0.909 & 0.938 & 0.5 & 8.3 \\
Pythia-160M & 189 & $-0.087$ & $-0.054$ & 0.767 & 0.781 & 3.2 & 9.5 \\
Pythia-410M & 168 & $-0.077$ & $-0.089$ & 0.725 & 0.724 & 6.0 & 14.9 \\
OLMo-1B & 151 & $-0.030$ & $-0.041$ & 0.742 & 0.410 & 8.6 & 15.2 \\
EuroBERT-210M & 167 & $-0.002$ & $-2.22\!\times\!10^{-16}$ & 0.991 & 0.983 & 19.8 & 40.1 \\
EuroBERT-610M & 172 & $-9.28\!\times\!10^{-4}$ & $-5.06\!\times\!10^{-5}$ & 0.982 & 0.944 & 20.9 & 34.3 \\
ModernCamemBERT & 132 & $-0.001$ & $-5.03\!\times\!10^{-17}$ & 0.991 & 0.944 & 18.2 & 37.9 \\
\bottomrule
\end{tabular}%
}
\vspace{2pt}
\begin{minipage}{0.98\linewidth}
\footnotesize\textit{Notes.} Rows independently median-aggregate valid downstream cells over all four decompositions, eight checkpoints, and six tracked layers within each model. $G_{\rm head}(q)$ is selected mass minus the equal-size activation-energy-head mass; $E_{\rm mass}(q)$ is selected mass divided by the exact additive top-$k$ mass. Empty hard selections and zero-total score cells are excluded where the statistic is undefined. The activation-energy head is a strong baseline, while the oracle-efficiency columns quantify how much supervised score mass the behaviorally selected set captures.
\end{minipage}
\end{table*}

For the SVD endpoint analysis, let $e_i$ be retained-component energy, $q_i=e_i/\sum_j e_j$, and $d_E=\exp(-\sum_i q_i\log q_i)/K_{\max}$ (with $0\log 0=0$), an energy-weighted entropy-effective-number analogue of effective rank \citep{roy2007effective}. Let $T_{.90}/K_{\max}$ be the energy-ranked prefix length needed to capture $90\%$ of $\tau$. Figure~\ref{fig:svd_tail} compares the first and final common-grid checkpoints. The energy effective number decreases at five of six relative layers in both architecture aggregates. Downstream $\Delta\mathrm{KL}$ rises at all six decoder layer ranks (final/early ratios $1.40$--$4.57\times$) but falls at the first five encoder ranks ($0.15$--$0.92\times$), before rising at the final rank ($2.53\times$). This provides a depth-resolved architecture signature; the groups also differ in data, objective, and training recipe.

\begin{figure*}[t]
\centering
\includegraphics[width=.94\textwidth]{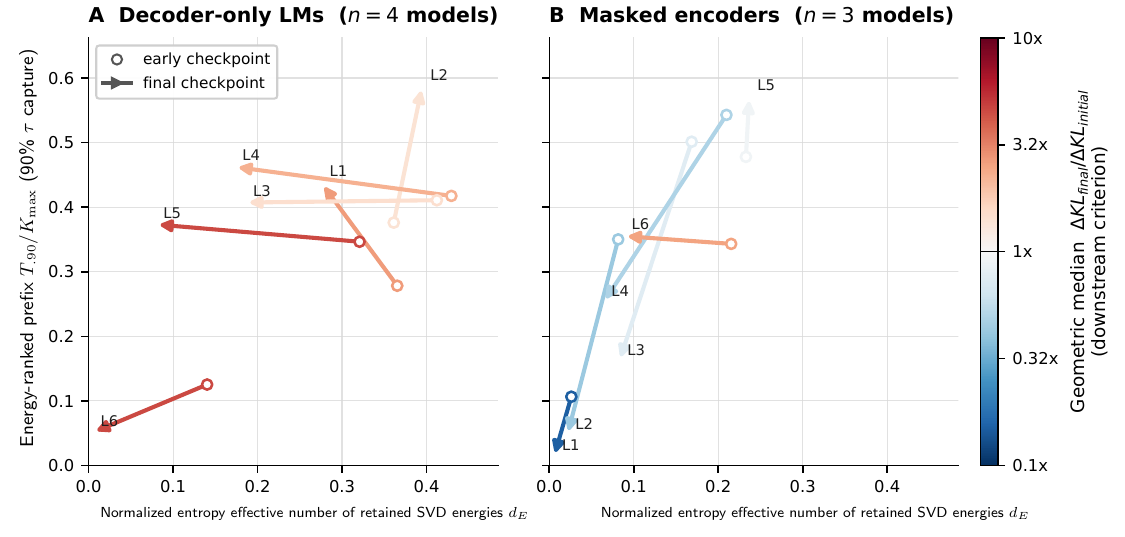}
\caption{\textbf{SVD energy concentration and supervised-gradient coverage.} Circles and arrowheads denote the earliest and final common-grid checkpoints. Coordinates are architecture-group medians at each of six relative layer ranks: horizontal position is normalized SVD energy effective number and vertical position is the energy-ranked prefix needed to capture $90\%$ of $\tau$. Color gives the geometric-median downstream $\Delta\mathrm{KL}_{\rm final}/\Delta\mathrm{KL}_{\rm early}$ ratio. The vertical coordinate is an ordering prefix, not a subspace dimension.}
\label{fig:svd_tail}
\end{figure*}

For Figure~\ref{fig:space_phase}, let $b_c=K_{\rm soft}^c/K_{\max}$, $A=d_E(t_0)-d_E(T)$, and $B=[b_{\rm down}-b_{\rm rec}]_T-[b_{\rm down}-b_{\rm rec}]_{t_0}$. Energy concentrates ($A>0$) in 36 of 42 SVD model--layer pairs, while relative downstream broadening ($B>0$) occurs in 28 of 42. All three Pythia model medians have $B>0$, compared with none of the four non-Pythia medians, a descriptive family contrast. Point color is the normalized downstream endpoint change
\begin{equation}
L=\log_{10}\!\frac{\Delta\mathrm{KL}_{\rm down}(T)/\mathrm{CE}_{0,\rm down}(T)}{\Delta\mathrm{KL}_{\rm down}(t_0)/\mathrm{CE}_{0,\rm down}(t_0)}.
\end{equation}
These summaries track allocation rather than concept identity or ambient dimension.

\begin{figure*}[t]
\centering
\includegraphics[width=.98\textwidth]{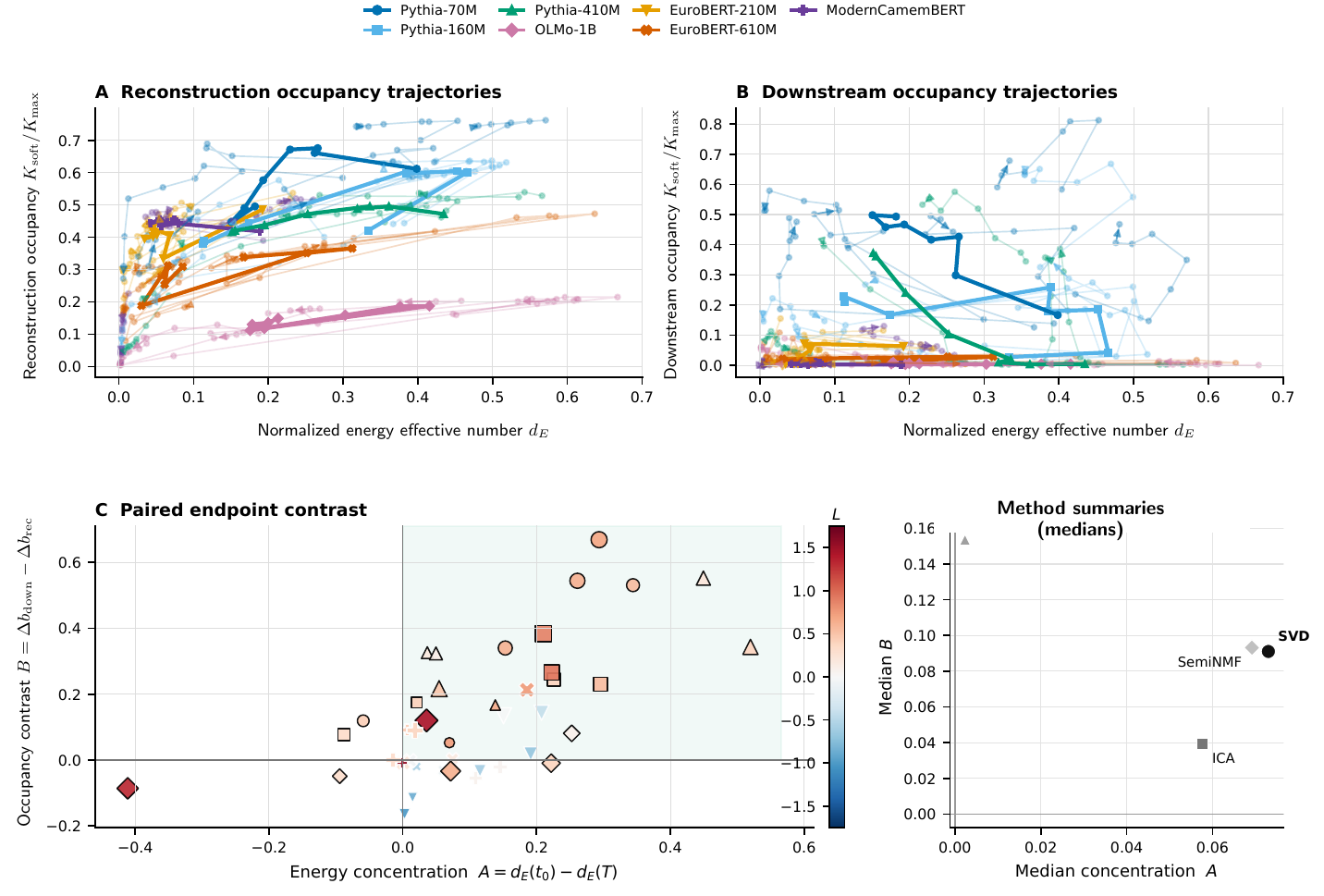}
\caption{\textbf{Endpoint energy--occupancy phase portrait.} A--B show faint SVD model--layer trajectories and thick within-model layer medians, relating normalized energy effective number to reconstruction or downstream soft occupancy. C plots the endpoint contrasts $A$ and $B$; marker size encodes relative depth, shape encodes model, and color encodes the normalized downstream $\Delta\mathrm{KL}$ change $L$ defined above. The right panel gives decomposition medians as a robustness summary. Occupancy is retained capacity, not dimensionality.}
\label{fig:space_phase}
\end{figure*}

\FloatBarrier

\subsection{Protocol and output-metric diagnostics}

All stored TwoNN estimates lie below the lower clamp of one quarter of ambient width (Figure~\ref{fig:kmax_clamp}). Thus the executed $K_{\max}$ is a consistent width-based candidate budget within each model, and mask selection---rather than variation in the raw TwoNN estimate---determines retained capacity.

\begin{figure*}[t]
\centering
\includegraphics[width=.72\textwidth]{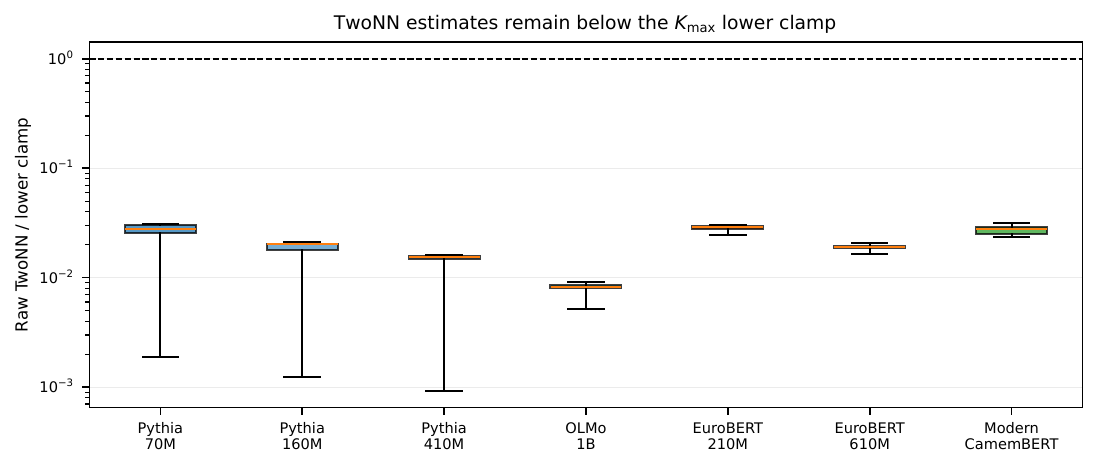}
\caption{\textbf{Candidate-budget diagnostic.} Distributions show raw TwoNN estimates divided by the implementation's lower clamp. Every value lies below one, so the clamped estimate determines every executed budget.}
\label{fig:kmax_clamp}
\end{figure*}

Figure~\ref{fig:behavioral_agreement} compares the three behavioral monitors within checkpoint strata. The high pooled KL--CE correlations reported in the main text are accompanied by informative model-level structure: checkpoint-wise correlations remain stronger in autoregressive than masked models, while CE--accuracy signs agree in $80.4\%$ of reconstruction and $89.6\%$ of downstream cells after model macro-averaging. Each correlation uses the eight scheduled checkpoints at fixed model, method, target, and layer; Panel D retains the exact ties induced by discrete accuracy.

\begin{figure*}[t]
\centering
\includegraphics[width=.98\textwidth]{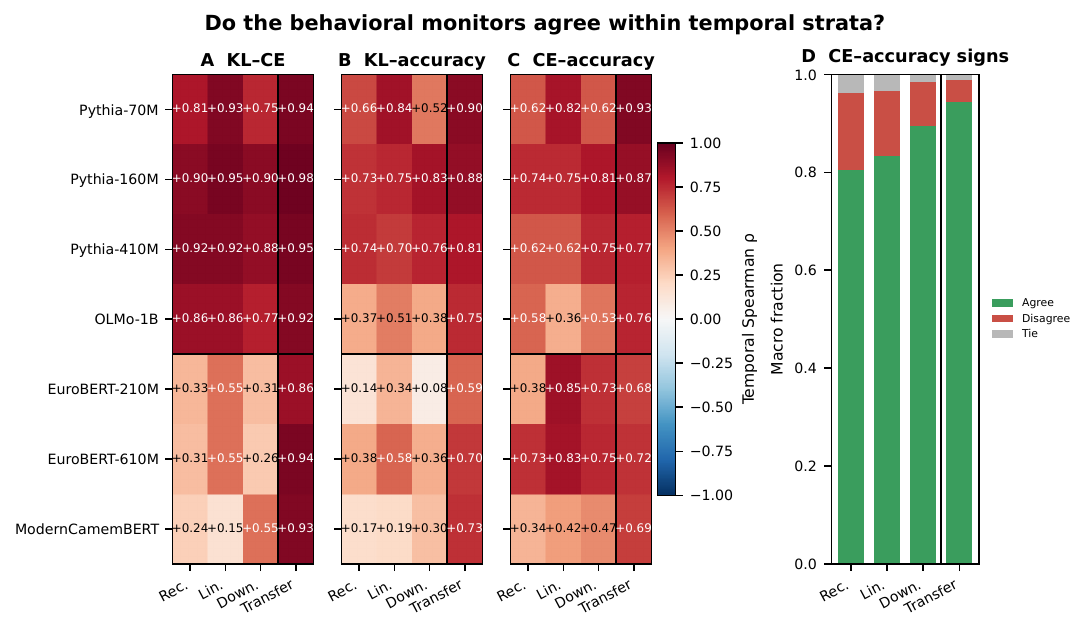}
\caption{\textbf{Behavioral-monitor agreement within temporal strata.} A--C give model-level medians of within-method/layer Spearman correlations over eight scheduled checkpoints for predictive-distribution shift, relative target-token CE damage, and target-token accuracy damage. D reports macro-averaged CE--accuracy sign agreement, disagreement, and exact ties. Transfer is visually separated, includes learned self-alignment, and is not a continuity trend. Cells are structured repeated measurements, not IID replicates.}
\label{fig:behavioral_agreement}
\end{figure*}

Figure~\ref{fig:soft_hard} connects the continuous fitted gates used for held-out intervention to the thresholded sets used for composition. Positive soft mass can be distributed among gates that all remain below $0.5$, so such cells still provide behavioral measurements even when the hard set is empty. Among the 5,376 common-grid cells, 316 hard masks are empty; model-macro rates are $0.0\%$ for reconstruction, $1.0\%$ for fixed-head selection, $12.9\%$ for downstream selection, and $9.6\%$ for transfer. The occupancy gap compares cardinalities, and the selected-side margin is the smallest retained gate's distance above the reporting threshold.

\begin{figure*}[t]
\centering
\includegraphics[width=.98\textwidth]{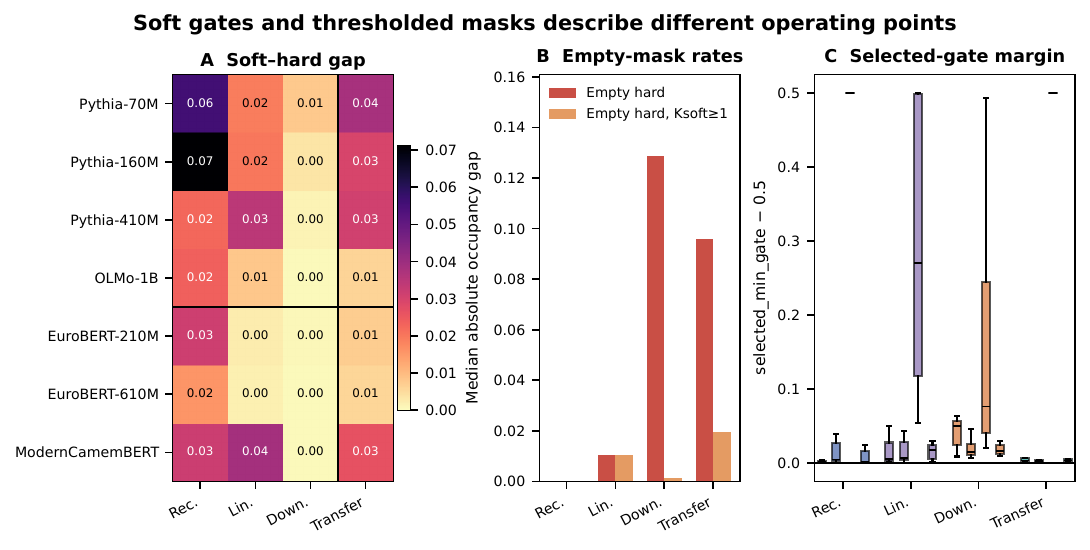}
\caption{\textbf{Soft-gate and hard-mask threshold diagnostics.} A reports the median absolute difference between normalized soft and hard cardinality. B separates all empty hard masks from the subset whose soft cardinality is at least one. C shows the smallest selected gate's margin above $0.5$ for nonempty hard masks, by target and decomposition. Together the panels explain how a continuous operating point maps to the reported hard set.}
\label{fig:soft_hard}
\end{figure*}

\FloatBarrier
\clearpage

\section{Proofs for Section~\ref{sec:theory}}
\label{app:theory_proofs}

\begin{proof}[Proof of Proposition~\ref{prop:local_downstream_approx}]
For fixed $X$, define
\[
K_X(\delta)=
\KL\!\left(
p_t(\cdot\mid a(X))
\middle\|
p_t(\cdot\mid a(X)+\delta)
\right).
\]
The first derivative of $K_X$ vanishes at $\delta=0$. Its Hessian at zero is $G_t(a(X))$, obtained by applying the chain rule to the softmax KL Hessian $S(p_t(\cdot\mid a(X)))$; terms involving second derivatives of the downstream logit map vanish because the logit-space KL gradient is zero at $\delta=0$. Set
$q_X(\delta)=\frac12\delta^\top G_t(a(X))\delta$.
Taylor's theorem gives
\begin{align*}
&\left|K_X(\Delta_z(X))-q_X(\Delta_z(X))\right|
\\
&\qquad\le
\frac{B_X(z)}{6}\|\Delta_z(X)\|_2^3.
\end{align*}
Taking the expectation proves Equation~\eqref{eq:local_downstream_bound}. For the optimizer comparison, let
$\eta_D=\eta_t(z_D)$ and $\eta_Q=\eta_t(z_Q)$. Because the shared penalty cancels, $|F_D-F_Q|=|D_t-Q_t|$. Optimality of $z_Q$ for $F_Q$ gives
\[
F_Q(z_Q)\le F_Q(z_D)\le F_D(z_D)+\eta_D.
\]
The local bound at $z_Q$ then gives
\[
F_D(z_Q)\le F_Q(z_Q)+\eta_Q
\le F_D(z_D)+\eta_D+\eta_Q,
\]
which proves Equation~\eqref{eq:local_optimizer_comparison}.
\end{proof}

\begin{proof}[Proof of Proposition~\ref{prop:gate_averaging}]
For every coordinate $i$, independence of the $B$ examples gives
\begin{align}
\E\!\left[\bar g_{B,i}^{2}\right]
&=(\E g_i)^2+\frac{\mathrm{Var}(g_i)}{B}
\nonumber\\
&=\omega_i+\frac{\kappa_i}{B}
=\left(1-\frac1B\right)\omega_i+\frac{\tau_i}{B}.
\end{align}
Summing over $i\in A$ proves Equation~\eqref{eq:gate_batch_identity}. No cross-coordinate assumption is required because the squared Euclidean norm is the sum of the coordinate squares.
\end{proof}

For completeness, let $\bar g_B=B^{-1}\sum_{b=1}^B(g_i(X_b))_{i=1}^{K_{\max}}$ be the full gate-gradient vector for a realized batch, and consider a first-order gate change $\delta$ supported on $A$. Cauchy--Schwarz gives
\begin{equation}
\max_{\substack{\operatorname{supp}(\delta)\subseteq A\\
\|\delta\|_2\le\varepsilon}}
-\langle\bar g_B,\delta\rangle
=\varepsilon\|\bar g_{B,A}\|_2.
\label{eq:gate_linear_potential}
\end{equation}
This is the largest \emph{linearized} loss decrease under the diagnostic gate constraint, not an observed training update.

\paragraph{Consequence for an SVD energy prefix.}
Order the SVD components of the fitted activation matrix by decreasing activation energy $e_{(1)}\ge\cdots\ge e_{(K_{\max})}$, and write $H_k=\{(1),\ldots,(k)\}$. These prefixes are optimal for rank-$k$ Euclidean reconstruction \citep{eckart1936approximation,jolliffe2002principal}. Writing $H_k^c$ for the complement among the $K_{\max}$ fitted coordinates, Proposition~\ref{prop:gate_averaging} gives the exact omitted batch-gate signal
\begin{equation}
\E\|\bar g_{B,H_k^c}\|_2^2
=\sum_{i>k}\left[
\left(1-\frac1B\right)\omega_{(i)}+\frac{\tau_{(i)}}B
\right].
\label{eq:svd_tail_gate_identity}
\end{equation}
For $\alpha\in(0,1]$ and $\sum_i\tau_i>0$, define the energy-ranked coverage prefix
\begin{equation}
T_\alpha=
\min\left\{k:
\frac{\sum_{i=1}^{k}\tau_{(i)}}{\sum_j\tau_j}\ge\alpha
\right\}.
\label{eq:salience_coverage_depth}
\end{equation}
If $k<T_\alpha$, more than $1-\alpha$ of the single-example squared gate signal lies outside $H_k$. At larger $B$, this conclusion depends increasingly on the stored $\omega$ mass in the tail. Equations~\eqref{eq:svd_tail_gate_identity}--\eqref{eq:salience_coverage_depth} are within-checkpoint statements: they neither align coordinates over training nor imply that the energy prefix consumes ambient dimensions.
\end{document}